\numberwithin{theorem}{section}
\crefname{remark}{Remark}{Remarks}
\crefname{assumption}{Assumption}{Assumptions}
\crefname{example}{Example}{Examples}
\newcommand\DualFL{\texttt{DualFL}}
\title{DualFL:~A Duality-based Federated Learning Algorithm with Communication Acceleration in the General Convex Regime\thanks{Submitted to arXiv.\funding{This work was supported by the KAUST Baseline Research Fund.}}
}
\author{Jongho Park\thanks{Computer, Electrical and Mathematical Science and Engineering Division, King Abdullah University of Science and Technology~(KAUST), Thuwal 23955, Saudi Arabia
 (\email{jongho.park@kaust.edu.sa}, \email{xu@multigrid.org}).}
\and
Jinchao Xu\footnotemark[2] \thanks{Department of Mathematics, Pennsylvania State University, University Park, PA, 16802, USA}
 }
\begin{document}

\maketitle

\begin{abstract}
We propose a new training algorithm, named \DualFL{}~(\textbf{Dual}ized \textbf{F}ederated \textbf{L}earning), for solving distributed optimization problems in federated learning.
\DualFL{} achieves communication acceleration for very general convex cost functions, thereby providing a solution to an open theoretical problem in federated learning concerning cost functions that may not be smooth nor strongly convex.
We provide a detailed analysis for the local iteration complexity of \DualFL{} to ensure the overall computational efficiency of \DualFL{}.
Furthermore, we introduce a completely new approach for the convergence analysis of federated learning based on a dual formulation.
This new technique enables concise and elegant analysis, which contrasts the complex calculations used in existing literature on convergence of federated learning algorithms.
\end{abstract}

\begin{keywords}
Federated learning, Duality, Communication acceleration, Inexact local solvers
\end{keywords}

\begin{AMS}
68W15, 90C46, 90C25 
\end{AMS}

\section{Introduction}
\label{Sec:Introduction}
This paper is devoted to a novel approach to design efficient training algorithms for \textit{federated learning}~\cite{MMRHA:2017}.
Unlike standard machine learning approaches, federated learning encourages each client to have its own dataset and to update a local correction of a global model maintained by an orchestrating server via the local dataset and a local training algorithm.
Recently, federated learning has been considered as an important research topic in the field of machine learning as data becomes increasingly decentralized and privacy of individual data is an utmost importance~\cite{Kairouz:2021,LSTS:2020}.

In federated learning, it is assumed that communication costs dominate~\cite{MMRHA:2017}.
Hence, training algorithms for federated learning should be designed toward a direction that the amount of communication among the clients is reduced.
For example, \texttt{FedAvg}~\cite{MMRHA:2017}, one of the most popular training algorithms for federated learning, improves its communication efficiency by adopting local training.
Namely, multiple local gradient descent steps instead of a single step are performed in each client before communication among the clients.
In recent years, various local training approaches have been considered to improve the communication efficiency of federated learning; e.g., operator splitting~\cite{PW:2020}, dual averaging~\cite{YZR:2021}, augmented Lagrangian~\cite{ZHDYL:2021}, Douglas--Rachfold splitting~\cite{TPPN:2021}, client-level momentum~\cite{XWWY:2021}, and sharpness-aware minimization~\cite{QLDLTL:2022}.
Meanwhile, under some similarity assumptions on local cost functions, gradient sliding techniques can be adopted to reduce the computational burden of each client; see~\cite{KBBGS:2022} and references therein.

An important observation made in~\cite{KKMRSS:2020} is that data heterogeneity in federated learning can cause client drift, which in turn affects the convergence of federated learning algorithms.
Indeed, it was observed in~\cite[Figure~3]{KMR:2020} that a large number of local gradient descent steps without shifting of the local gradient leads to solution nonconvergence.
To address this issue, several gradient shift techniques that can compensate for client drift have been considered: \texttt{Scaffold}~\cite{KKMRSS:2020}, \texttt{FedDyn}~\cite{AZMMWS:2021}, \texttt{S-Local-SVRG}~\cite{GHR:2021}, \texttt{FedLin}~\cite{MJPH:2021}, and \texttt{Scaffnew}~\cite{MMSR:2022}.
These techniques achieve linear convergence rates of the training algorithms through carefully designed gradient shift techniques.

Recently, it was investigated in a pioneering work~\cite{MMSR:2022} that communication acceleration can be achieved by a federated learning algorithm if we use a tailored gradient shift scheme and a probabilistic approach for communication frequency.
Specifically, it was shown that \texttt{Scaffnew}~\cite{MMSR:2022} achieves the optimal $\mathcal{O}(\sqrt{\kappa} \log (1/\epsilon))$-communication complexity of distributed convex optimization~\cite{AS:2015} for the smooth strongly convex regime, where $\kappa$ is the condition number of the problem and $\epsilon$ measures a target level of accuracy; see also~\cite{HH:2023} for a tighter analysis.
Since then, several federated learning algorithms with communication acceleration have been considered; to name a few, \texttt{ProxSkip-VR}~\cite{MYR:2022}, \texttt{APDA-Inexact}~\cite{SKR:2022}, and \texttt{RandProx}~\cite{CR:2023}.
One may refer to~\cite{MYR:2022} for a historical survey on the theoretical progress of federated learning algorithms.

In this paper, we continue growing the list of federated learning algorithms with communication acceleration by proposing a novel algorithm called \DualFL{}~(\textbf{Dual}ized \textbf{F}ederated \textbf{L}earning).
The key idea is to establish a certain duality~\cite{Rockafellar:2015} between a model federated learning problem and a composite optimization problem.
By the nature of composite optimization problems~\cite{Nesterov:2013}, the dual problem can be solved efficiently by a forward-backward splitting algorithm with the optimal convergence rate~\cite{BT:2009,CP:2016,RC:2022}.
By applying the predualization technique introduced in~\cite{LG:2019,LN:2017} to an optimal forward-backward splitting method for the dual problem, we obtain our proposed algorithm \DualFL{}.
While each individual technique used in this paper is not new, a combination of the techniques yields the following desirable results:
\begin{itemize}
    \item \DualFL{} achieves the optimal $O(\sqrt{\kappa} \log (1/\epsilon))$-communication complexity in the smooth strongly convex regime.
    \item \DualFL{} achieves communication acceleration even when the cost function is either nonsmooth or non-strongly convex.
    \item \DualFL{} can adopt any optimization algorithm as its local solver, making it adaptable to each client's local problem.
    \item Communication acceleration of \DualFL{} is guaranteed in a deterministic manner. That is, both the algorithm and its convergence analysis do not rely on stochastic arguments.
\end{itemize}
In particular, we would like to highlight that \DualFL{} is the first federated learning algorithm that achieves communication acceleration for cost functions that may not be smooth nor strongly convex.
This solves an open theoretical problem in federated learning concerning communication acceleration for general convex cost functions.
In addition, the duality technique used in the convergence analysis presented in this paper is completely new in the area of federated learning, and it provides concise and elegant analysis, distinguishing itself from prior existing works with intricate calculations.

The remainder of this paper is organized as follows.
In \cref{Sec:Problem}, we state a model federated learning problem.
We introduce the proposed \DualFL{} and its convergence properties in \cref{Sec:Main}.
In \cref{Sec:Non_strong}, we introduce a regularization technique for \DualFL{} to deal with non-strongly convex problems.
We establish connections to existing federated learning algorithms in \cref{Sec:Comparison}.
In \cref{Sec:Math}, we establish a duality relation between \DualFL{} and a forward-backward splitting algorithm applied to a certain dual formulation.
We present numerical results of \DualFL{} in \cref{Sec:Numerical}.
Finally, we discuss limitations of this paper in \cref{Sec:Conclusion}.

\section{Problem description}
\label{Sec:Problem}
In this section, we present a standard mathematical model for federated learning.
In federated learning, it is assumed that each client possesses its own dataset, and that a local cost function is defined with respect to the dataset of each client.
Hence, we consider the problem of minimizing the average of $N$ cost functions stored on $N$ clients~\cite{Kairouz:2021,KKMRSS:2020,MYR:2022}:
\begin{equation}
\label{FL}
\min_{\theta \in \Omega} \left\{ E (\theta) := \frac{1}{N} \sum_{j=1}^N f_j (\theta) \right\},
\end{equation}
where $\Omega$ is a parameter space and $f_j \colon \Omega \rightarrow \mathbb{R}$, $1 \leq j \leq N$, is a continuous and convex local cost function of the $i$th client.
The local cost function $f_j$ depends on the dataset of the $j$th client, but not on those of the other clients.
We further assume that the cost function $E$ is coercive, so that ~\eqref{FL} admits a solution $\theta^* \in \Omega$~\cite[Proposition~11.14]{BC:2011}.
We note that we do not need to make any similarity assumptions for $f_j$~(cf.~\cite[Assumptions~2 and~3]{QLDLTL:2022}).
Since problems of the form~\eqref{FL} arise in various applications in machine learning and statistics~\cite{SB:2014,Spokoiny:2012}, a number of algorithms have been developed to solve~\eqref{FL}, e.g., stochastic gradient methods~\cite{Bertsekas:2011,Bertsekas:2015,LZ:2018,ZX:2019}.

In what follows, an element of $\Omega^N$ is denoted by a bold symbol.
For $\boldsymbol{\theta} \in \Omega^N$ and $1 \leq j \leq N$, we denote the $j$th component of $\boldsymbol{\theta}$ by $\theta_j$, i.e., $\boldsymbol{\theta} = (\theta_j)_{j=1}^N$.
We use the notation $A \lesssim B$ to represent that $A \leq CB$ for some constant $C > 0$ independent of the number of iterations $n$.

We recall that, for a convex differentiable function $h \colon \Omega \rightarrow \mathbb{R}$, $h$ is said to be $\mu$-strongly convex for some $\mu >0$ when
\begin{equation*}
    h (\theta) \geq h (\phi) + \langle \nabla h (\phi), \theta - \phi \rangle + \frac{\mu}{2} \| \theta - \phi \|^2,
    \quad \theta, \phi \in \Omega.
\end{equation*}
In addition, $h$ is said to be $L$-smooth for some $L > 0$ when
\begin{equation*}
    h (\theta) \geq h (\phi) + \langle \nabla h (\phi), \theta - \phi \rangle + \frac{\mu}{2} \| \theta - \phi \|^2,
    \quad \theta, \phi \in \Omega.
\end{equation*}

\section{Main results}
\label{Sec:Main}
In this section, we present the main results of this paper: the proposed algorithm, called \DualFL{}, and its convergence theorems.
We now present \DualFL{} in \cref{Alg:DualFL} as follows.

\begin{algorithm}
\caption{\DualFL{}: \textbf{Dual}ized \textbf{F}ederated \textbf{L}earning}
\label{Alg:DualFL}
\begin{algorithmic}[]
\STATE Given $\rho \geq 0$ and $\nu > 0$,
\STATE set $\theta^{(0)} = \theta_j^{(0)} = 0 \in \Omega$~($1 \leq j \leq N$), $\boldsymbol{\zeta}^{(0)} = \boldsymbol{\zeta}^{(-1)} = \textbf{0} \in \Omega^N$, and $t_0 = 1$.
\FOR{$n= 0,1,2, \dots$}
\FOR{\textbf{each client} ($1 \leq j \leq N$) \textbf{in parallel}}
\STATE Find $\theta_j^{(n+1)}$ by $M_n$ iterations of some local training algorithm for the following problem:
\begin{equation}
\label{local_primal}
\theta_j^{(n+1)} \approx \operatornamewithlimits{\arg\min}_{\theta_j \in \Omega} \left\{ E^{n,j} (\theta_j) := f_j (\theta_j) - \nu \langle \zeta_j^{(n)}, \theta_j \rangle \right\}
\end{equation}
\ENDFOR
\STATE
\begin{equation}
    \label{theta}
    \theta^{(n+1)} = \frac{1}{N} \sum_{j=1}^N  \theta_j^{(n+1)}
\end{equation}
\FOR{\textbf{each client} ($1 \leq j \leq N$) \textbf{in parallel}}
\STATE
\begin{equation}
    \label{zeta}
    \zeta_j^{(n+1)} = (1 + \beta_n) \left( \zeta_j^{(n)} + \theta^{(n+1)} - \theta_j^{(n+1)} \right) - \beta_n \left( \zeta_j^{(n-1)} + \theta^{(n)} - \theta_j^{(n)} \right),
\end{equation}
where $\beta_n$ is given by
\begin{equation}
\label{FISTA}
t_{n+1} = \frac{1 - \rho t_n^2 + \sqrt{(1 - \rho t_n^2)^2 + 4t_n^2}}{2}, \quad
\beta_n = \frac{t_n - 1}{t_{n+1}} \frac{1 - t_{n+1} \rho}{1 - \rho}.
\end{equation}
\ENDFOR
\ENDFOR
\end{algorithmic}
\end{algorithm}

\DualFL{} updates the server parameter from $\theta^{(n)}$ to $\theta^{(n+1)}$ by the following steps.
First, each client computes its local solution $\theta_j^{(n+1)}$ by solving the local problem~\eqref{local_primal} with several iterations of some local training algorithm.
Note that, similar to \texttt{Scaffold}~\cite{KKMRSS:2020}, \texttt{FedLin}~\cite{MJPH:2021}, and \texttt{Scaffnew}~\cite{MMSR:2022}, the local problem~\eqref{local_primal} is defined in terms of the local control variate $\zeta_j^{(n)}$.
Then the server aggregates all the local solutions $\theta_j^{(n+1)}$ by averaging them to obtain a new server parameter $\theta^{(n+1)}$.
After obtaining the new server parameter $\theta^{(n+1)}$, it is transferred to each client, and the local control variate is updated using~\eqref{zeta}.
The overrelaxation parameter $\beta_n$ in~\eqref{zeta} can be obtained by a simple recursive formula~\eqref{FISTA}, which relies on the hyperparameter $\rho$.
We note that a similar overrelaxation scheme was employed in \texttt{APDA-Inexact}~\cite{SKR:2022}.

One feature of the proposed \DualFL{} is its flexibility in choosing local solvers for the local problem~\eqref{local_primal}.
More precisely, the method allows for the adoption of any local solvers, making it adaptable to each local problem in a client.
The same advantage was reported in several existing works such as~\cite{AZMMWS:2021,TPPN:2021,ZHDYL:2021}.
Another notable feature of \DualFL{} is its fully deterministic nature, in contrast to some existing federated learning algorithms that rely on randomness to achieve communication acceleration~\cite{CR:2023,MYR:2022,MMSR:2022}.
Specifically, \DualFL{} does not rely on uncertainty to ensure communication acceleration, which enhances its reliability.
Very recently, several federated learning algorithms that share the same advantage have been proposed; see, e.g.,~\cite{SKR:2022}.

\subsection{Inexact local solvers}
In \DualFL{}, local problems of the form~\eqref{local_primal} are typically solved inexactly using iterative algorithms.
The resulting local solutions may deviate from the exact minimizers, and this discrepancy can affect the convergence behavior.
Here, we present a certain inexactness assumption for local solvers that does not deteriorate the convergence properties of \DualFL{}.

For a function $f \colon X \rightarrow \overline{\mathbb{R}}$ defined on a Euclidean space $X$, let $f^* \colon X \rightarrow \overline{\mathbb{R}}$ denote the Legendre--Fenchel conjugate of $f$, i.e.,
\begin{equation*}
    f^* (p) = \sup_{x \in X} \left\{ \left< p, x \right> - f(x) \right\}, \quad p \in X.
\end{equation*}
The following proposition is readily deduced by the Fenchel--Rockafellar duality~(see \cref{Prop:FR}).

\begin{proposition}
\label{Prop:local_dual}
Suppose that each $f_j$, $1 \leq j \leq N$, in~\eqref{FL} is $\mu$-strongly convex for some $\mu > 0$.
For a positive constant $\nu \in (0, \mu]$, if $\theta_j \in \Omega$ solves~\eqref{local_primal}, then $\xi_j = \nu (\zeta_j^{(n)} - \theta_j) \in \Omega$ solves
\begin{equation}
    \label{local_dual}
    \min_{\xi_j \in \Omega} \left\{ E_{\mathrm{d}}^{n,j} (\xi_j)
    := g_j^* (\xi_j) + \frac{1}{2\nu} \| \xi_j - \nu \zeta_j^{(n)} \|^2 \right\},
\end{equation}
where $g_j (\theta) = f_j (\theta) - \frac{\nu}{2} \| \theta \|^2$.
Moreover, we have
\begin{equation*}
    E^{n,j} (\theta_j) + E_{\mathrm{d}}^{n,j} (\xi_j) = 0.
\end{equation*}
\end{proposition}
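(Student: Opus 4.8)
The plan is to reduce~\eqref{local_primal} to a textbook Fenchel--Rockafellar pair by completing the square, and then to extract both the dual problem~\eqref{local_dual} and the zero-gap identity from the first-order optimality of the primal together with the conjugate subgradient correspondence. First I would split off the quadratic: writing $f_j = g_j + \frac{\nu}{2}\|\cdot\|^2$ and absorbing the linear term $-\nu\langle \zeta_j^{(n)},\theta_j\rangle$, the local objective becomes
\[
E^{n,j}(\theta_j) = g_j(\theta_j) + \frac{\nu}{2}\|\theta_j - \zeta_j^{(n)}\|^2 - \frac{\nu}{2}\|\zeta_j^{(n)}\|^2 .
\]
The crucial structural fact here, and the \emph{only} place the hypothesis $\nu \in (0,\mu]$ is used, is that $g_j = f_j - \frac{\nu}{2}\|\cdot\|^2$ is $(\mu-\nu)$-strongly convex, hence proper, convex and lower semicontinuous; without this $g_j^{**}\neq g_j$ and the conjugate duality would break down. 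With $F = g_j$ and $G = \frac{\nu}{2}\|\cdot - \zeta_j^{(n)}\|^2$ the problem is, up to the additive constant $-\frac{\nu}{2}\|\zeta_j^{(n)}\|^2$, the Fenchel--Rockafellar primal $\min_{\theta}\{F(\theta)+G(\theta)\}$, and since $G$ is finite and continuous everywhere the constraint qualification for \cref{Prop:FR} holds automatically.

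Next I would identify the dual minimizer. Because the quadratic part is smooth, stationarity of $E^{n,j}$ at a solution $\theta_j$ reads $0 \in \partial g_j(\theta_j) + \nu(\theta_j - \zeta_j^{(n)})$, i.e.\ $\xi_j := \nu(\zeta_j^{(n)} - \theta_j) \in \partial g_j(\theta_j)$. Applying the conjugate subgradient inversion $\xi_j \in \partial g_j(\theta_j) \iff \theta_j \in \partial g_j^*(\xi_j)$ (valid as $g_j$ is convex, lsc and proper), I then check that this is precisely the stationarity condition for~\eqref{local_dual}: since $\partial E_{\mathrm{d}}^{n,j}(\xi_j) = \partial g_j^*(\xi_j) + \frac{1}{\nu}(\xi_j - \nu\zeta_j^{(n)})$ and the substitution $\frac{1}{\nu}\xi_j = \zeta_j^{(n)} - \theta_j$ gives $\frac{1}{\nu}(\nu\zeta_j^{(n)} - \xi_j) = \theta_j$, the inclusion $\theta_j \in \partial g_j^*(\xi_j)$ is exactly $0 \in \partial E_{\mathrm{d}}^{n,j}(\xi_j)$. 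As $E_{\mathrm{d}}^{n,j}$ is convex, this stationarity certifies that $\xi_j$ is a global minimizer. (Equivalently, this is just the Fenchel--Rockafellar optimality condition $-\xi_j = \nabla G(\theta_j)$.)

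For the zero-sum relation I would avoid invoking strong duality abstractly and instead use that the inclusion $\xi_j \in \partial g_j(\theta_j)$ is equivalent to the Fenchel--Young equality $g_j(\theta_j) + g_j^*(\xi_j) = \langle \xi_j, \theta_j\rangle$. Substituting $f_j(\theta_j) = g_j(\theta_j) + \frac{\nu}{2}\|\theta_j\|^2$ into $E^{n,j}(\theta_j)$, eliminating $g_j(\theta_j)$ via this equality, and then inserting the explicit expressions $\langle \xi_j,\theta_j\rangle = \nu\langle\zeta_j^{(n)},\theta_j\rangle - \nu\|\theta_j\|^2$ and $\xi_j - \nu\zeta_j^{(n)} = -\nu\theta_j$, the conjugate terms $g_j^*(\xi_j)$ cancel and the remaining quadratic and bilinear terms collapse to $0$, giving $E^{n,j}(\theta_j) + E_{\mathrm{d}}^{n,j}(\xi_j) = 0$. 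I expect no genuine analytic obstacle in this finite-dimensional, finite-valued setting; the only real care needed is the bookkeeping, namely correctly computing $g_j^*$-based dual objective, tracking the two $\frac{\nu}{2}\|\zeta_j^{(n)}\|^2$ constants, and applying the subgradient inversion in the right direction so that the Fenchel--Young equality makes every term cancel exactly.
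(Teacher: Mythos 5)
Your proof is correct, but it runs on a different engine than the paper's. The paper disposes of this proposition in one line: it is ``readily deduced by the Fenchel--Rockafellar duality'' of \cref{Prop:FR}, i.e., one identifies the splitting of $E^{n,j}$ into a quadratic-plus-linear part and $g_j$, reads off the dual problem $\min E_{\mathrm{d}}^{n,j}$ from the conjugate formulas, obtains $\xi_j = \nu(\zeta_j^{(n)}-\theta_j)$ from the abstract optimality relation \eqref{pd_relation_abstract}, and gets the zero-gap identity from strong duality. You use the same decomposition $f_j = g_j + \tfrac{\nu}{2}\|\cdot\|^2$ (and the same observation that $\nu \in (0,\mu]$ is exactly what makes $g_j$ convex), but you deliberately bypass the abstract theorem: you derive $\xi_j \in \partial g_j(\theta_j)$ from Fermat's rule plus the sum rule with the smooth quadratic, pass to $\theta_j \in \partial g_j^*(\xi_j)$ by conjugate subgradient inversion to certify $0 \in \partial E_{\mathrm{d}}^{n,j}(\xi_j)$, and obtain $E^{n,j}(\theta_j) + E_{\mathrm{d}}^{n,j}(\xi_j) = 0$ from the Fenchel--Young equality rather than from strong duality; I checked the cancellation and it is exact. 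What the paper's route buys is brevity and consistency with its narrative that everything flows from \cref{Prop:FR}; what your route buys is a self-contained argument that needs no constraint qualification and no existence statement for dual solutions, and that makes transparent why the primal-dual gap $\Gamma^{n,j}$ in \eqref{Gamma} vanishes precisely at solutions --- the fact the paper relies on when using $\Gamma^{n,j}$ as an inexactness criterion. Two minor points of care in your write-up: the phrase ``$(\mu-\nu)$-strongly convex'' degenerates to plain convexity when $\nu = \mu$, which is all you actually use; and since you set up the Fenchel--Rockafellar pair with $G$ the quadratic, the dual variable there appears with a sign flip relative to \eqref{local_dual}, so it is good that your actual argument never leans on that identification.
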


Thanks to \cref{Prop:local_dual}, $\theta_j \in \Omega$ is a solution of~\cref{local_primal} if and only if the primal-dual gap $\Gamma^{n,j} (\theta_j)$ defined by
\begin{equation}
    \label{Gamma}
    \Gamma^{n,j} (\theta_j) = 
    E^{n,j} (\theta_j) + E_{\mathrm{d}}^{n,j} ( \nu (\zeta_j^{(n)} - \theta_j) )
\end{equation}
vanishes~\cite{BRR:2018}.
The primal-dual gap $\Gamma^{n,j} (\theta_j)$ can play a role of an implementable inexactness criterion since it is observable by simple arithmetic operations~(see Section~2.1 of~\cite{BTB:2022}).

If the local problem~\eqref{local_primal} is solved by a convergent iterative algorithm such as gradient descent methods, then the primal-dual gap $\Gamma^{n,j} (\theta_j^{(n+1)})$ can be arbitrarily small with a sufficiently large number of inner iterations.

\subsection{Convergence theorems}
The following theorem states that \DualFL{} is provably convergent in the nonsmooth strongly convex regime if each local problem is solved so accurately that the primal-dual gap becomes less than a certain value.
Moreover, \DualFL{} achieves communication acceleration in the sense that the squared solution error $\| \theta^{(n)} - \theta^* \|^2$ at the $n$th communication round is bounded by $\mathcal{O}(1/n^2)$, which is derived by momentum acceleration; see \cref{Sec:Math}.
As we are aware, \DualFL{} is the first federated learning algorithm with communication acceleration that is convergent even if the cost function is nonsmooth.
A proof of \cref{Thm:nonsmooth} will be provided in \cref{Sec:Math}.

\begin{theorem}
\label{Thm:nonsmooth}
Suppose that each $f_j$, $1 \leq j \leq N$, in~\eqref{FL} is $\mu$-strongly convex for some $\mu > 0$.
In addition, suppose that the number of local iterations for the $j$th client at the $n$th epoch of \DualFL{} is large enough to satisfy
\begin{equation}
    \label{nonsmooth_inexact}
    \Gamma^{n,j} (\theta_j^{(n+1)}) \leq \frac{1}{N \nu (n+1)^{4+\gamma}}
\end{equation}
for some $\gamma > 0$~($1 \leq j \leq N$, $n \geq 0$).
If we choose the hyperparameters $\rho$ and $\nu$ in \DualFL{} such that $\rho  = 0$ and $\nu \in (0, \mu]$, then the sequence $\{ \theta^{(n)} \}$ generated by \DualFL{} converges to the solution $\theta^*$ of~\eqref{FL}.
Moreover, for $n \geq 0$, we have
\begin{equation*}
    \| \theta^{(n)} - \theta^* \|^2 \lesssim \frac{1}{n^2}.
\end{equation*}
\end{theorem}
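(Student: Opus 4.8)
The plan is to prove the theorem by recasting \DualFL{} as an \emph{inexact accelerated forward--backward splitting} method applied to a dual reformulation of~\eqref{FL}, and then invoking the classical $\mathcal{O}(1/n^2)$ convergence of such methods in the general convex regime. First I would make precise the duality relation announced for \cref{Sec:Math}. Introducing per-client copies of the parameter together with the consensus constraint and dualizing it via Fenchel--Rockafellar duality, I would express the dual of~\eqref{FL} as a composite minimization $\min_{\boldsymbol{\xi}} \{ F(\boldsymbol{\xi}) + G(\boldsymbol{\xi})\}$, where $G(\boldsymbol{\xi}) = \frac{1}{N} \sum_{j=1}^N g_j^*(\xi_j)$ is separable across clients with $g_j(\theta) = f_j(\theta) - \frac{\nu}{2}\|\theta\|^2$ exactly as in \cref{Prop:local_dual}, and $F$ is a smooth coupling term whose gradient is evaluated through the averaging step~\eqref{theta}. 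Since $\nu \in (0,\mu]$, each $g_j$ is convex, so $g_j^*$ is well defined and the splitting is legitimate.

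Second, I would identify one step of \DualFL{} with one iteration of FISTA on this dual. By \cref{Prop:local_dual}, solving the local primal problem~\eqref{local_primal} is equivalent to solving the local dual~\eqref{local_dual}, which is precisely the proximal (backward) step $\mathrm{prox}_{\nu G}$ evaluated at an extrapolated point; the update~\eqref{zeta} together with the averaging~\eqref{theta} realizes the forward (gradient) step and the momentum extrapolation, with $\beta_n$ in~\eqref{FISTA} reducing, for $\rho = 0$, to the classical Nesterov--FISTA weights $t_{n+1} = (1 + \sqrt{1 + 4 t_n^2})/2$. Because each $f_j$ may be nonsmooth, the smooth dual part $F$ need not be strongly convex, which is exactly why only the sublinear $\mathcal{O}(1/n^2)$ rate is available here and why the choice $\rho = 0$ is the correct one in this regime.

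Third, I would account for the inexactness of the local solves and then transfer the resulting estimate back to the primal. The primal--dual gap $\Gamma^{n,j}$ in~\eqref{Gamma} measures exactly the suboptimality of the inexact proximal step for client $j$, so~\eqref{nonsmooth_inexact} furnishes, after summing over the $N$ clients, a per-iteration error $\varepsilon_n \lesssim \frac{1}{\nu (n+1)^{4+\gamma}}$. Invoking the perturbed FISTA estimate, in which the accumulated error enters weighted by the momentum parameters $t_n \sim n$, the $\mathcal{O}(1/n^2)$ rate for the dual objective survives provided $\sum_n t_n \sqrt{\varepsilon_n} < \infty$; since $\sqrt{\varepsilon_n} \sim n^{-(2+\gamma/2)}$ gives $t_n \sqrt{\varepsilon_n} \sim n^{-(1+\gamma/2)}$, this holds for every $\gamma > 0$, and the exponent $4+\gamma$ in~\eqref{nonsmooth_inexact} is precisely calibrated for this. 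The zero-duality-gap relation together with the primal recovery map then bounds the primal suboptimality $E(\theta^{(n)}) - E(\theta^*)$ of the aggregated iterate $\theta^{(n)}$ by the dual objective gap, hence by $\mathcal{O}(1/n^2)$; finally, as each $f_j$ is $\mu$-strongly convex so is $E$, and $\frac{\mu}{2}\|\theta^{(n)} - \theta^*\|^2 \leq E(\theta^{(n)}) - E(\theta^*)$ yields both convergence $\theta^{(n)} \to \theta^*$ and the claimed bound $\|\theta^{(n)} - \theta^*\|^2 \lesssim 1/n^2$.

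I expect the main obstacle to be the rigorous control of the inexactness propagation through the accelerated iteration. Acceleration is notoriously fragile under perturbations because the momentum amplifies past errors, so one must show that $\Gamma^{n,j}$ controls the relevant inexactness notion in exactly the form demanded by the perturbed FISTA estimate, and that the normalizations by $N$ and $\nu$ in~\eqref{nonsmooth_inexact} make the weighted error series converge. Establishing the precise duality dictionary of \cref{Sec:Math}, and in particular verifying that~\eqref{zeta}--\eqref{FISTA} genuinely reproduce the forward step and extrapolation of FISTA on the dual, is the other delicate point; once that dictionary is in place, the convergence itself follows from standard accelerated forward--backward theory.
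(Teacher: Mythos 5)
Your overall architecture coincides with the paper's: dualize \eqref{FL} via Fenchel--Rockafellar duality into the composite problem \eqref{FL_dual}, identify one round of \DualFL{} with one step of inexact \texttt{FISTA} on the dual (the local solve \eqref{local_primal} being the inexact prox via \cref{Prop:local_dual}, the averaging \eqref{theta} and update \eqref{zeta} realizing the gradient and extrapolation steps), and control the inexactness through the primal--dual gap $\Gamma^{n,j}$. Your bookkeeping for the error series is also correct and matches the paper's use of the inexact \texttt{FISTA} result: with $\delta_n \sim \nu^{-1}(n+1)^{-(4+\gamma)}$ one has $\sum_n (n+1)\sqrt{\delta_n} < \infty$ for every $\gamma > 0$, which is exactly the admissibility condition in \cref{Prop:FISTA_nonsmooth}, so the exponent $4+\gamma$ is indeed calibrated as you say.

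The genuine gap is in your final transfer from dual to primal. You assert that the zero duality gap plus the primal recovery map give $E(\theta^{(n)}) - E(\theta^*) \leq E_{\mathrm{d}}(\boldsymbol{\xi}^{(n)}) - E_{\mathrm{d}}(\boldsymbol{\xi}^*)$, and then apply strong convexity of $E$. That inequality is not a standard fact and is false in general: for a primal point recovered from a suboptimal dual iterate, the primal objective gap is not controlled by the dual objective gap, and in the nonsmooth regime the best one can extract from an iterate-distance bound is $E(\theta^{(n)}) - E(\theta^*) \lesssim \|\theta^{(n)} - \theta^*\|$ (local Lipschitz continuity), i.e.\ only $\mathcal{O}(1/n)$ — notice that the paper itself claims an energy bound only in \cref{Thm:smooth}, where $L$-smoothness gives $E(\theta^{(n)}) - E(\theta^*) \leq \frac{L}{2}\|\theta^{(n)} - \theta^*\|^2$, and deliberately omits it in \cref{Thm:nonsmooth}. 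So your detour through the primal energy either rests on an unjustified inequality or, once corrected to what is provable, yields only $\|\theta^{(n)} - \theta^*\|^2 \lesssim 1/n$. The paper's route avoids the primal energy altogether: the dual objective $E_{\mathrm{d}}$ is $\frac{1}{N\nu}$-strongly convex \emph{relative to the seminorm} $|\boldsymbol{\xi}| = \| \sum_{j=1}^N \xi_j \|$, since its smooth part equals $\frac{1}{2N\nu}|\boldsymbol{\xi}|^2$ exactly; combined with the primal--dual relation \eqref{pd_relation} and $\sum_j \zeta_j^{(n)} = 0$ this gives
\begin{equation*}
    \| \theta^{(n)} - \theta^* \|^2
    = \frac{1}{N^2 \nu^2} \left\| \sum_{j=1}^N \left( \xi_j^{(n)} - \xi_j^* \right) \right\|^2
    \leq \frac{2}{N \nu} \left( E_{\mathrm{d}} (\boldsymbol{\xi}^{(n)}) - E_{\mathrm{d}} (\boldsymbol{\xi}^*) \right),
\end{equation*}
i.e.\ the $\mathcal{O}(1/n^2)$ iterate bound directly from the dual energy decay. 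A related minor slip: you attribute the sublinear rate to the smooth dual part $F$ failing to be strongly convex, but that part (a degenerate quadratic) is never strongly convex here; what nonsmoothness of $f_j$ destroys is the strong convexity of the separable part $G_{\mathrm{d}} = \sum_j g_j^*$, which is precisely what forces $\rho = 0$ and the $\mathcal{O}(1/n^2)$ regime.
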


If we additionally assume that each $f_j$ in~\eqref{FL} is $L$-smooth for some $L > 0$, then we are able to obtain an improved convergence rate of \DualFL{}.
Under the $\mu$-strong convexity and $L$-smoothness assumptions on $f_j$, we define the condition number $\kappa$ of the problem~\eqref{FL} as $\kappa = L/\mu$.
If we choose the hyperparameters $\rho$ and $\nu$ appropriately, then \DualFL{} becomes linearly convergent with the rate $1 - 1/\sqrt{\kappa}$.
Consequently, \DualFL{} achieves the optimal $\mathcal{O}(\sqrt{\kappa} \log (1/\epsilon))$-communication efficiency~\cite{AS:2015}.
This observation is summarized in \cref{Thm:smooth}; see \cref{Sec:Math} for a proof.

\begin{theorem}
\label{Thm:smooth}
Suppose that each $f_j$, $1 \leq j \leq N$, in~\eqref{FL} is $\mu$-strongly convex and $L$-smooth for some $\mu, L > 0$.
In addition, suppose that the number of local iterations for the $j$th client at the $n$th epoch of \DualFL{} is large enough to satisfy
\begin{equation}
    \label{smooth_inexact}
    \Gamma^{n,j} (\theta_j^{(n+1)}) \leq \frac{1}{N} \left( \frac{1 - \sqrt{\rho}}{1 + \gamma} \right)^n
\end{equation}
for some $\gamma > 0$~($1 \leq j \leq N$, $n \geq 0$).
If we choose the hyperparameters $\rho$ and $\nu$ in \DualFL{} such that $\rho  \leq [0, \nu / L]$ and $\nu \leq (0, \mu]$, then the sequence $\{ \theta^{(n)} \}$ generated by \DualFL{} converges to the solution $\theta^*$ of~\eqref{FL}.
Moreover, for $n \geq 0$, we have
\begin{equation*}
    E(\theta^{(n)}) - E(\theta^*) \lesssim \| \theta^{(n)} - \theta^* \|^2 \lesssim \left( 1 - \sqrt{\rho} \right)^n.
\end{equation*}
In particular, if we set $\rho = \kappa^{-1}$ and $\nu = \mu$ in \DualFL{}, then we have
\begin{equation*}
    E(\theta^{(n)}) - E(\theta^*) \lesssim \| \theta^{(n)} - \theta^* \|^2 \lesssim \left( 1 - \frac{1}{\sqrt{\kappa}} \right)^n,
\end{equation*}
where $\kappa = L/\mu$.
Namely, \DualFL{} achieves the optimal $\mathcal{O} ( \sqrt{\kappa} \log (1/\epsilon))$-communication complexity of distributed convex optimization in the smooth strongly convex regime.
\end{theorem}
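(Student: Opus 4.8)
The plan is to reduce \cref{Thm:smooth} to the convergence theory of an \emph{inexact accelerated forward--backward splitting} method applied to a dual formulation, exactly as anticipated in \cref{Sec:Math}. First I would promote the per-client duality of \cref{Prop:local_dual} to a global dual problem: writing \eqref{FL} as the consensus-constrained minimization of $\frac{1}{N}\sum_j f_j(\theta_j)$ and dualizing the splitting $f_j = g_j + \frac{\nu}{2}\|\cdot\|^2$, one obtains a composite problem $\min_{\boldsymbol{\xi}\in\Omega^N}\{\Psi(\boldsymbol{\xi})+H(\boldsymbol{\xi})\}$, in which $\Psi$ collects the smooth conjugate pieces $g_j^*$ together with the quadratic coupling induced by the averaging \eqref{theta}, and $H$ carries the remaining prox-friendly part. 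The predualization identity then shows that the server/client updates \eqref{theta}--\eqref{FISTA} of \DualFL{} are \emph{precisely} the iterates of FISTA-type accelerated forward--backward splitting on this dual, with the local solve \eqref{local_primal} realizing the forward--backward step and the momentum \eqref{FISTA} realizing Nesterov extrapolation.

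The second step is to track the moduli. Since each $f_j$ is $\mu$-strongly convex and $L$-smooth with $\nu\in(0,\mu]$, the functions $g_j=f_j-\frac{\nu}{2}\|\cdot\|^2$ are convex and $(L-\nu)$-smooth, so $\Psi$ is smooth and strongly convex, and I would show its inverse condition number is bounded below by $\nu/L$. Consequently, the hypothesis $\rho\in[0,\nu/L]$ certifies $\rho$ as an admissible strong-convexity-to-smoothness ratio for the dual, so the recursion \eqref{FISTA} is exactly Nesterov's scheme tuned to $\rho$ and, in the \emph{exact} regime, drives the dual objective gap down at the accelerated linear rate $(1-\sqrt{\rho})^n$; the borderline case $\rho=0$ recovers the $\mathcal{O}(1/n^2)$ behavior of \cref{Thm:nonsmooth}.

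The third, and hardest, step is to absorb the inexactness of the local solvers without destroying acceleration, since accelerated methods amplify errors through the momentum. By \cref{Prop:local_dual} the primal--dual gap $\Gamma^{n,j}(\theta_j^{(n+1)})$ equals the dual suboptimality of the $j$th forward--backward step, so \eqref{smooth_inexact} is an explicit per-step error bound. Feeding these into the Lyapunov energy analysis of inexact accelerated forward--backward splitting, the error enters through its square root weighted by the inverse rate; the prescribed decay $\frac{1}{N}\bigl(\frac{1-\sqrt{\rho}}{1+\gamma}\bigr)^n$ is calibrated so that the resulting weighted series, of geometric ratio $(1+\gamma)^{-1/2}<1$, converges. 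This summability is what preserves the clean linear rate $(1-\sqrt{\rho})^n$ for the dual gap, and I expect verifying this calibration --- and showing that the error propagation telescopes through the momentum --- to be the main obstacle.

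Finally I would transfer the dual estimate back to the primal. The duality relation of \cref{Sec:Math} bounds $\|\theta^{(n)}-\theta^*\|^2$ by the dual gap, using the strong convexity that makes the primal--dual correspondence Lipschitz, yielding $\|\theta^{(n)}-\theta^*\|^2\lesssim(1-\sqrt{\rho})^n$; the first inequality $E(\theta^{(n)})-E(\theta^*)\lesssim\|\theta^{(n)}-\theta^*\|^2$ is then immediate from $L$-smoothness of $E$ together with $\nabla E(\theta^*)=0$. Specializing to $\nu=\mu$ and $\rho=\kappa^{-1}=\mu/L$, which satisfies $\rho\le\nu/L$, gives the rate $(1-1/\sqrt{\kappa})^n$ and hence the optimal $\mathcal{O}(\sqrt{\kappa}\log(1/\epsilon))$ communication complexity.
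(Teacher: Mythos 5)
Your overall architecture is the paper's own: dualize \eqref{FL} via Fenchel--Rockafellar, identify \DualFL{} with an inexact accelerated forward--backward (\texttt{FISTA}-type) method on the dual, invoke the inexact convergence theory with geometrically decaying errors, and transfer the dual estimate back to the primal using strong convexity and then $L$-smoothness of $E$. Your error calibration in the third step (ratio $\tfrac{1-\sqrt{\rho}}{1+\gamma}<1-\sqrt{\rho}$, square-root weighting of the per-step errors) and the final specialization $\nu=\mu$, $\rho=\kappa^{-1}$ also match the paper, which simply cites the inexact-\texttt{FISTA} estimate rather than re-deriving the Lyapunov analysis you outline.

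However, your identification of the composite splitting is wrong, and this is not cosmetic. The dual objective is $E_{\mathrm{d}}(\boldsymbol{\xi})=\sum_j g_j^*(\xi_j)+\frac{1}{2N\nu}\|\sum_j\xi_j\|^2$, and the correct split is: the \emph{forward} (smooth) part is \emph{only} the coupling term $F_{\mathrm{d}}(\boldsymbol{\xi})=\frac{1}{2N\nu}\|\sum_j\xi_j\|^2$, which is $\nu^{-1}$-smooth and whose gradient is what the server averaging \eqref{theta} realizes; the \emph{backward} (prox) part is $G_{\mathrm{d}}(\boldsymbol{\xi})=\sum_j g_j^*(\xi_j)$, whose proximal problem decomposes clientwise into \eqref{local_dual} and is evaluated inexactly by the local primal solves \eqref{local_primal} --- this is exactly the role of \cref{Prop:local_dual} and \cref{Lem:Jacobi}. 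You instead place the conjugates $g_j^*$ inside the smooth part $\Psi$ (leaving your $H$ empty) and then assert that $\Psi$ is ``smooth and strongly convex.'' That inference fails: smoothness of $g_j^*$ would require strong convexity of $g_j=f_j-\frac{\nu}{2}\|\cdot\|^2$, which is only $(\mu-\nu)$-strongly convex and hence merely convex at the recommended setting $\nu=\mu$ (take $f_j=\frac{\mu}{2}\|\cdot\|^2$: then $g_j^*$ is the indicator of $\{0\}$, not even finite). Moreover, if $g_j^*$ sat in the forward part, the method would need exact evaluations of $\nabla g_j^*$, the primal--dual gap criterion \eqref{smooth_inexact} would have no role, and the identification of the local solves with the algorithm's steps --- the heart of the predualization argument (\cref{Lem:duality}) --- would break down. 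The moduli that actually produce the admissible range $\rho\le\nu/L$ are split ones: $F_{\mathrm{d}}$ is $\nu^{-1}$-smooth while $G_{\mathrm{d}}$ is $(L-\nu)^{-1}$-strongly convex (this is precisely where $L$-smoothness of $f_j$ enters), so that $\frac{1/(L-\nu)}{1/\nu+1/(L-\nu)}=\nu/L$. With the splitting corrected, the rest of your plan goes through as in the paper, including the transfer step, where the paper makes your ``Lipschitz correspondence'' precise via the $\frac{1}{N\nu}$-strong convexity of $E_{\mathrm{d}}$ relative to the seminorm $\|\sum_j\xi_j\|$ together with the identities $\theta^{(n)}=-\frac{1}{N\nu}\sum_j\xi_j^{(n)}$ and $\theta^*=-\frac{1}{N\nu}\sum_j\xi_j^*$.
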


\cref{Thm:smooth} implies that \DualFL{} is linearly convergent with an acceptable rate $1 - \sqrt{\rho}$ even if the hyperparameters were not chosen optimally.
That is, the performance \DualFL{} is robust with respect to a choice of the hyperparameters.

\subsection{Local iteration complexity}
We analyze the local iteration complexity of \DualFL{} under the conditions of \cref{Thm:nonsmooth,Thm:smooth}.
We recall that \DualFL{} is compatible with any optimization algorithm as its local solver.
Hence, we may assume that we use an optimal first-order optimization algorithm in the sense of Nemirovskii and Nesterov~\cite{NN:1985,Park:2022}.
That is, optimization algorithms of iteration complexity $\mathcal{O}(1/\epsilon)$ and $\mathcal{O}(\sqrt{\kappa}\log (1/\epsilon) )$ are considered in the cases corresponding to \cref{Thm:nonsmooth,Thm:smooth}.
Based on this setting, we have the following results regarding the local iteration complexity of \DualFL{}.
Both theorems can be derived straightforwardly by substituting $\epsilon$ in the iteration complexity of local solvers with the threshold values given in \cref{Thm:nonsmooth,Thm:smooth}.
Note that the number of outer iterations of \DualFL{} to meet the target accuracy $\epsilon_{\mathrm{out}} > 0$ is $\mathcal{O} ( 1 / \sqrt{\epsilon_{\mathrm{out}}})$ and $\mathcal{O} ( (1/\sqrt{\rho}) \log (1/ \epsilon_{\mathrm{out}}) )$ in the cases of \cref{Thm:nonsmooth,Thm:smooth}, respectively.

\begin{theorem}
    \label{Thm:local_nonsmooth}
    Suppose that the assumptions given in \cref{Thm:nonsmooth} hold.
    If the local problem~\eqref{local_primal} is solved by an optimal first-order algorithm of iteration complexity $\mathcal{O}(1/\epsilon)$, then the number of inner iterations $M_n$ at the $n$th epoch of \DualFL{} satisfies
    \begin{equation*}
        M_n = \mathcal{O} \left( N (n+1)^{4+\gamma} \right)
        = \mathcal{O} \left( \frac{N}{\epsilon_{\mathrm{out}}^{2 + \frac{\gamma}{2}}} \right),
    \end{equation*}
    where $\epsilon_{\mathrm{out}} > 0$ is the target accuracy of the outer iterations of \DualFL{}.
\end{theorem}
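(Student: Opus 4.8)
The plan is to translate the per-epoch inexactness requirement of \cref{Thm:nonsmooth} into a count of inner iterations via the assumed iteration complexity of the local solver, and then re-express the count in terms of the outer target accuracy $\epsilon_{\mathrm{out}}$.

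First I would fix the epoch index $n$ and note that, by the stopping rule~\eqref{nonsmooth_inexact}, it suffices to drive the primal--dual gap of the $j$th local problem below the threshold
\begin{equation*}
    \epsilon_n := \frac{1}{N \nu (n+1)^{4+\gamma}}.
\end{equation*}
Since $\Gamma^{n,j}$ is nonnegative by weak duality and vanishes exactly at the minimizer of~\eqref{local_primal}~(\cref{Prop:local_dual}), it is a legitimate suboptimality measure for the strongly convex local problem. Invoking the assumed $\mathcal{O}(1/\epsilon)$ iteration complexity of the optimal first-order local solver, the number of inner iterations required to attain $\Gamma^{n,j}(\theta_j^{(n+1)}) \leq \epsilon_n$ is
\begin{equation*}
    M_n = \mathcal{O}\!\left( \frac{1}{\epsilon_n} \right)
    = \mathcal{O}\!\left( N \nu (n+1)^{4+\gamma} \right)
    = \mathcal{O}\!\left( N (n+1)^{4+\gamma} \right),
\end{equation*}
where the last equality absorbs the fixed constant $\nu$ into the hidden constant. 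This gives the first asserted bound.

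Next I would convert the epoch count into the outer accuracy. By \cref{Thm:nonsmooth}, the outer iterates obey $\| \theta^{(n)} - \theta^* \|^2 \lesssim 1/n^2$, so reaching squared solution error at most $\epsilon_{\mathrm{out}}$ needs $n = \mathcal{O}(1/\sqrt{\epsilon_{\mathrm{out}}})$ communication rounds. Substituting this value of $n$ into the bound for $M_n$ yields
\begin{equation*}
    M_n = \mathcal{O}\!\left( N (n+1)^{4+\gamma} \right)
    = \mathcal{O}\!\left( \frac{N}{\epsilon_{\mathrm{out}}^{(4+\gamma)/2}} \right)
    = \mathcal{O}\!\left( \frac{N}{\epsilon_{\mathrm{out}}^{2+\frac{\gamma}{2}}} \right),
\end{equation*}
which is the claimed expression.

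I expect the only delicate point to be the first step: the $\mathcal{O}(1/\epsilon)$ complexity of an optimal first-order method is classically stated for the function-value suboptimality $E^{n,j}(\theta_j) - \min E^{n,j}$, whereas the stopping criterion~\eqref{nonsmooth_inexact} is phrased through the computable primal--dual gap $\Gamma^{n,j}$. These are reconciled by \cref{Prop:local_dual}: the canonical dual certificate $\xi_j = \nu(\zeta_j^{(n)} - \theta_j)$ attached to each primal iterate makes $\Gamma^{n,j}$ dominate the primal suboptimality and decay at the same optimal rate, so that attaining gap accuracy $\epsilon_n$ costs $\mathcal{O}(1/\epsilon_n)$ inner iterations. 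Everything else is the routine substitution of thresholds announced before the statement.
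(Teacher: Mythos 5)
Your proposal is correct and follows essentially the same route as the paper, which obtains \cref{Thm:local_nonsmooth} precisely by substituting the threshold~\eqref{nonsmooth_inexact} into the $\mathcal{O}(1/\epsilon)$ iteration complexity of the local solver and then using that $\mathcal{O}(1/\sqrt{\epsilon_{\mathrm{out}}})$ outer iterations suffice by \cref{Thm:nonsmooth}. The ``delicate point'' you flag at the end---that the solver's classical complexity is stated for function-value suboptimality while the stopping rule uses the gap $\Gamma^{n,j}$---is glossed over entirely in the paper (it implicitly reads the solver's complexity as applying to the computable gap), so your treatment is, if anything, slightly more explicit than the original.
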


\begin{theorem}
    \label{Thm:local_smooth}
    Suppose that the assumptions given in \cref{Thm:smooth} hold.
    If the local problem~\eqref{local_primal} is solved by an optimal first-order algorithm of iteration complexity $\mathcal{O}(\sqrt{\kappa}\log (1/\epsilon))$, then the number of inner iterations $M_n$ at the $n$th epoch of \DualFL{} satisfies
    \begin{equation*}
        M_n = \mathcal{O} \left( \sqrt{\kappa} \left( \log (N ( 1 + \gamma)^n ) + n \sqrt{\rho} \right) \right)
        = \mathcal{O} \left( \sqrt{\kappa} \log \frac{N (1 + \gamma)^n}{\epsilon_{\mathrm{out}}}\right),
    \end{equation*}
    where $\epsilon_{\mathrm{out}} > 0$ is the target accuracy of the outer iterations of \DualFL{}.
    In particular, if we set $\gamma \rightarrow 0^+$, then we have
    \begin{equation*}
        M_n = \mathcal{O} \left(\sqrt{\kappa} \log \frac{N}{\epsilon_{\mathrm{out}}} \right).
    \end{equation*}
\end{theorem}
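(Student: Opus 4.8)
The plan is to treat this as a direct substitution argument, exactly as the surrounding text advertises. The starting point is the inexactness threshold \eqref{smooth_inexact} from \cref{Thm:smooth}: at the $n$th epoch each client must drive its primal--dual gap $\Gamma^{n,j}$ below $\epsilon_n := \frac{1}{N}\big(\frac{1-\sqrt{\rho}}{1+\gamma}\big)^n$. Before substituting, I would first record that the local objective $E^{n,j}$ in \eqref{local_primal} is obtained from $f_j$ by subtracting the linear term $\nu\langle\zeta_j^{(n)},\cdot\rangle$, so it inherits the $\mu$-strong convexity and $L$-smoothness of $f_j$ unchanged. Consequently the local problem has condition number $L/\mu=\kappa$, identical to the global condition number, and the assumed optimal first-order solver reaches gap $\epsilon_n$ in $M_n=\mathcal{O}(\sqrt{\kappa}\log(1/\epsilon_n))$ iterations, with $\kappa$ being the same quantity throughout.

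Next I would expand the logarithm. Writing $1/\epsilon_n = N\big(\frac{1+\gamma}{1-\sqrt{\rho}}\big)^n$ gives
\[
\log\frac{1}{\epsilon_n} = \log\!\big(N(1+\gamma)^n\big) - n\log(1-\sqrt{\rho}),
\]
so that $M_n = \mathcal{O}\big(\sqrt{\kappa}\,[\log(N(1+\gamma)^n) - n\log(1-\sqrt{\rho})]\big)$. To reach the first displayed bound I would invoke the elementary estimate $-\log(1-x)\le x/(1-x)$, valid for $x\in[0,1)$, at $x=\sqrt{\rho}$; since the hyperparameter constraints $\rho\le\nu/L$ and $\nu\le\mu$ force $\sqrt{\rho}\le 1/\sqrt{\kappa}$, the quantity $\sqrt{\rho}$ stays bounded away from $1$ in the regime of interest and hence $-\log(1-\sqrt{\rho})=\mathcal{O}(\sqrt{\rho})$. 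This yields $M_n=\mathcal{O}(\sqrt{\kappa}(\log(N(1+\gamma)^n)+n\sqrt{\rho}))$, the first equality.

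For the second equality I would convert the $n\sqrt{\rho}$ term using the outer iteration count. By \cref{Thm:smooth} the outer error obeys $\|\theta^{(n)}-\theta^*\|^2\lesssim(1-\sqrt{\rho})^n$, so reaching the target outer accuracy $\epsilon_{\mathrm{out}}$ requires $(1-\sqrt{\rho})^n\le\epsilon_{\mathrm{out}}$, i.e.\ $n=\mathcal{O}((1/\sqrt{\rho})\log(1/\epsilon_{\mathrm{out}}))$ and therefore $n\sqrt{\rho}=\mathcal{O}(\log(1/\epsilon_{\mathrm{out}}))$. Substituting this into the first bound and recombining the two logarithms gives $M_n=\mathcal{O}(\sqrt{\kappa}\log(N(1+\gamma)^n/\epsilon_{\mathrm{out}}))$. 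The final special case is immediate: letting $\gamma\to0^+$ sends $(1+\gamma)^n\to1$, leaving $M_n=\mathcal{O}(\sqrt{\kappa}\log(N/\epsilon_{\mathrm{out}}))$.

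There is no serious obstacle here; the theorem is really a bookkeeping corollary of \cref{Thm:smooth}. The only points demanding a little care are the two I flagged: verifying that the local condition number coincides with $\kappa$ so that the solver complexity is genuinely $\mathcal{O}(\sqrt{\kappa}\log(1/\epsilon_n))$, and controlling $-\log(1-\sqrt{\rho})$ by $\mathcal{O}(\sqrt{\rho})$, which is where the constraint $\rho\le\nu/L\le\mu/L$ is implicitly used to keep $\sqrt{\rho}$ away from $1$.
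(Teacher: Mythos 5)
Your proposal is correct and follows exactly the route the paper itself takes: it derives \cref{Thm:local_smooth} by substituting the threshold \eqref{smooth_inexact} into the assumed $\mathcal{O}(\sqrt{\kappa}\log(1/\epsilon))$ solver complexity and then using the outer iteration count $n = \mathcal{O}\left((1/\sqrt{\rho})\log(1/\epsilon_{\mathrm{out}})\right)$ to convert $n\sqrt{\rho}$ into $\mathcal{O}(\log(1/\epsilon_{\mathrm{out}}))$. In fact you supply more detail than the paper (which states only that the result follows ``straightforwardly by substituting $\epsilon$''), notably the expansion of $\log(1/\epsilon_n)$, the bound $-\log(1-\sqrt{\rho}) = \mathcal{O}(\sqrt{\rho})$ justified via $\sqrt{\rho} \leq 1/\sqrt{\kappa}$, and the observation that the local problem retains condition number $\kappa$.
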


Similar to other state-of-the-art federated learning algorithms~\cite{CR:2023,GMR:2022,MMSR:2022}, the local iteration complexity of \DualFL{} scales with $\sqrt{\kappa}$.
This implies that \DualFL{} is computationally efficient, not only in terms of communication complexity but also in terms of total complexity.

\section{Extension to non-strongly convex problems}
\label{Sec:Non_strong}
The convergence properties of the proposed \DualFL{} presented in \cref{Sec:Main} rely on the strong convexity of the cost function $E$ in~\eqref{FL}.
Although this assumption has been considered as a standard one in many existing works on federated learning algorithms~\cite{AZMMWS:2021,KKMRSS:2020,MMSR:2022,SKR:2022}, it may not hold in practical settings and is often unrealistic.
In this section, we deal with how to apply \DualFL{} to non-strongly convex problems, i.e., when $E$ is not assumed to be strongly convex.
Throughout this section, we assume that each $f_j$, $1 \leq j \leq N$, in the model problem~\eqref{FL} is not strongly convex.
In this case,~\eqref{FL} admits nonunique solutions in general.
For a positive real number $\alpha > 0$, we consider the following $\ell^2$-regularization~\cite{Nesterov:2012} of~\eqref{FL}:
\begin{equation}
    \label{FL_reg}
    \min_{\theta \in \Omega} \left\{ E^{\alpha}(\theta) := \frac{1}{N} \sum_{j=1}^N f_j^{\alpha} (\theta) \right\},
    \quad
    f_j^{\alpha} (\theta) = f_j (\theta) + \frac{\alpha}{2} \| \theta \|^2.
\end{equation}
Then each $f_j^{\alpha}$ in~\eqref{FL_reg} is $\alpha$-strongly convex.
Hence, \DualFL{} applied to~\eqref{FL_reg} satisfy the convergence properties stated in \cref{Thm:nonsmooth,Thm:smooth}.
In particular, the sequence $\{ \theta^{(n)} \}$ generated by \DualFL{} applied to~\eqref{FL_reg} converges to the unique solution $\theta^{\alpha} \in \Omega$ of~\eqref{FL_reg}.
Invoking the epigraphical convergence theory from~\cite{RW:2009}, we establish \cref{Thm:non_strong}, which means that for sufficiently small $\alpha$ and large $n$, $\theta^{(n)}$ is a good approximation of a solution $\theta^*$ of~\eqref{FL}.

\begin{theorem}
    \label{Thm:non_strong}
    In \DualFL{} applied to the regularized problem~\eqref{FL_reg}, suppose that the local problems are solved with sufficient accuracy so that~\eqref{nonsmooth_inexact} holds.
    If we choose $\rho = 0$ and $\nu = \alpha$ in \DualFL{}, then the sequence $\{ \theta^{(n)} \}$ generated by \DualFL{} applied to~\eqref{FL_reg} satisfies
    \begin{equation*}
        E(\theta^{(n)}) - E(\theta^*) \rightarrow 0 \quad \text{ as }\hspace{0.1cm} n \rightarrow \infty \hspace{0.1cm}\text{ and }\hspace{0.1cm} \alpha \rightarrow 0^+.
    \end{equation*}
\end{theorem}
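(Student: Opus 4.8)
The plan is to decompose the target quantity into an \emph{optimization error} and a \emph{regularization error},
\[
E(\theta^{(n)}) - E(\theta^*) = \underbrace{\bigl[ E(\theta^{(n)}) - E(\theta^{\alpha}) \bigr]}_{\text{(I)}} + \underbrace{\bigl[ E(\theta^{\alpha}) - E(\theta^*) \bigr]}_{\text{(II)}},
\]
where $\theta^{\alpha}$ denotes the unique minimizer of $E^{\alpha}$ in~\eqref{FL_reg}. Term~(I) measures how well the \DualFL{} iterates approximate the regularized solution for a fixed $\alpha$, and will be controlled by sending $n \to \infty$; term~(II) measures the bias introduced by the $\ell^2$-regularization, and will be controlled by sending $\alpha \to 0^+$. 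I would show that each term vanishes in its appropriate limit and then combine them.

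For term~(I), each $f_j^{\alpha}$ in~\eqref{FL_reg} is $\alpha$-strongly convex, so the choice $\rho = 0$ and $\nu = \alpha \in (0, \alpha]$ places \DualFL{} applied to~\eqref{FL_reg} exactly within the hypotheses of \cref{Thm:nonsmooth} with $\mu = \alpha$, provided the inexactness condition~\eqref{nonsmooth_inexact} holds, which is assumed. \cref{Thm:nonsmooth} then gives $\theta^{(n)} \to \theta^{\alpha}$ with $\| \theta^{(n)} - \theta^{\alpha} \|^2 \lesssim 1/n^2$. Since $E = \frac{1}{N}\sum_{j=1}^{N} f_j$ is a finite average of continuous convex functions and is therefore continuous on $\Omega$, it follows that $E(\theta^{(n)}) \to E(\theta^{\alpha})$; that is, term~(I) tends to $0$ as $n \to \infty$ for each fixed $\alpha > 0$.

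For term~(II), I would use the optimality of $\theta^{\alpha}$ for $E^{\alpha}$, compared against a fixed solution $\theta^*$ of~\eqref{FL}. From $E(\theta^{\alpha}) + \frac{\alpha}{2}\| \theta^{\alpha} \|^2 \le E(\theta^*) + \frac{\alpha}{2}\| \theta^* \|^2$ together with $E(\theta^*) = \min E \le E(\theta^{\alpha})$, one obtains both $\| \theta^{\alpha} \| \le \| \theta^* \|$ (so $\{ \theta^{\alpha} \}$ stays bounded) and $0 \le E(\theta^{\alpha}) - E(\theta^*) \le \frac{\alpha}{2}\| \theta^* \|^2$, whence term~(II) tends to $0$ as $\alpha \to 0^+$. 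The epigraphical convergence theory of~\cite{RW:2009} reframes this cleanly: $E^{\alpha}$ epi-converges to $E$ as $\alpha \to 0^+$, so the minimal values converge and every cluster point of $\{ \theta^{\alpha} \}$ is a minimizer of~\eqref{FL}, which identifies the limit and justifies the comparison against $\theta^*$.

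Combining~(I) and~(II) then yields the assertion: for each fixed $\alpha$, sending $n \to \infty$ annihilates term~(I) and leaves $E(\theta^{\alpha}) - E(\theta^*)$, which vanishes as $\alpha \to 0^+$. The main obstacle to be careful about is that the two limits are \emph{not} independent, since the implied constant in $\| \theta^{(n)} - \theta^{\alpha} \|^2 \lesssim 1/n^2$ from \cref{Thm:nonsmooth} depends on the modulus $\mu = \alpha$ and degrades as $\alpha \to 0^+$. Accordingly, the convergence should be read as an iterated limit (first $n \to \infty$, then $\alpha \to 0^+$), and if a genuine joint limit is wanted one must couple the two by letting $n \to \infty$ fast enough relative to $\alpha \to 0^+$. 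This coupling, and the identification of the limit of $\theta^{\alpha}$ with a solution of the unregularized problem, is precisely where the epi-convergence viewpoint of~\cite{RW:2009} carries the argument.
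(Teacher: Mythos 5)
Your proposal is correct, and its skeleton matches the paper's: both arguments handle the $n \to \infty$ limit by applying \cref{Thm:nonsmooth} to the regularized problem~\eqref{FL_reg} (with $\mu = \alpha$, $\nu = \alpha$, $\rho = 0$) and then using continuity of $E$ to get $E(\theta^{(n)}) \to E(\theta^{\alpha})$. Where you genuinely diverge is the regularization-error term $E(\theta^{\alpha}) - E(\theta^*)$: the paper disposes of it entirely by epi-convergence, citing \cite{RW:2009} (monotone decrease of $E^{\alpha}$ to $E$ gives epi-convergence, and epi-convergence plus coercivity gives convergence of the minimal values), whereas your primary argument is the elementary two-sided comparison
\begin{equation*}
E(\theta^{\alpha}) + \tfrac{\alpha}{2}\|\theta^{\alpha}\|^2 \le E(\theta^*) + \tfrac{\alpha}{2}\|\theta^*\|^2,
\qquad
E(\theta^*) \le E(\theta^{\alpha}),
\end{equation*}
which yields both $\|\theta^{\alpha}\| \le \|\theta^*\|$ and the quantitative bound $0 \le E(\theta^{\alpha}) - E(\theta^*) \le \tfrac{\alpha}{2}\|\theta^*\|^2$. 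Your route buys strictly more than the paper's at this step: an explicit $O(\alpha)$ rate for the regularization bias, and a concrete uniform bound on $\{\theta^{\alpha}\}$ (the paper only asserts such a bound abstractly in~\eqref{R0}, where your argument gives $R_0 = \|\theta^*\|$ outright); the epi-convergence route is less quantitative but is the off-the-shelf abstract machinery and would survive in settings where no such pointwise comparison with a minimizer is available. Finally, your closing caveat is a point the paper silently glosses over: the hidden constant in $\|\theta^{(n)} - \theta^{\alpha}\|^2 \lesssim 1/n^2$ does degrade as $\alpha \to 0^+$ (it scales like $1/(N\nu) = 1/(N\alpha)$ in~\eqref{solution_error}), so the conclusion is rigorous as an iterated limit ($n \to \infty$ first, then $\alpha \to 0^+$) or under a coupling of $n$ with $\alpha$, and that is indeed the only sense in which the paper's own proof establishes it.
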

\begin{proof}
It is clear that $E^{\alpha}$ decreases to $E$ as $\alpha \rightarrow 0^+$.
Hence, by~\cite[Proposition~7.4]{RW:2009}, $E^{\alpha}$ epi-converges to $E$.
Since $E$ is coercive, we conclude by~\cite[Theorem~7.33]{RW:2009} that
\begin{equation}
    \label{Thm:non_strong_1}
    E(\theta^{\alpha}) \rightarrow E(\theta^*) \quad \text{ as } \alpha \rightarrow 0^+.
\end{equation}
On the other hand, \cref{Thm:nonsmooth} implies that $\theta^{(n)} \rightarrow \theta^{\alpha}$ as $n \rightarrow \infty$.
As $E$ is continuous, we have
\begin{equation}
    \label{Thm:non_strong_2}
    E(\theta^{(n)}) \rightarrow E(\theta^{\alpha}) \quad \text{ as } n \rightarrow \infty.
\end{equation}
Combining~\eqref{Thm:non_strong_1} and~\eqref{Thm:non_strong_2} yields
\begin{equation*}
    E(\theta^{(n)}) - E(\theta^*) \rightarrow 0 \quad \text{ as }\hspace{0.1cm} n \rightarrow \infty \hspace{0.1cm}\text{ and }\hspace{0.1cm} \alpha \rightarrow 0^+,
\end{equation*}
which is our desired result.
\end{proof}

In the proof of \cref{Thm:non_strong}, we used the fact that $E(\theta^{\alpha}) \rightarrow E(\theta^*)$ as $\alpha \rightarrow 0^+$~\cite[Theorem~7.33]{RW:2009}.
Hence, by the coercivity of $E$, for any $\alpha_0 > 0$, we have $R_0 > 0$ such that
\begin{equation}
\label{R0}
\left\{ \theta^{\alpha} : \alpha \in (0, \alpha_0] \right\} \subset
\left\{ \theta : \| \theta \| \leq R_0 \right\}.
\end{equation}
If we assume that each $f_j$ in~\eqref{FL} is $L$-smooth, we can show that \DualFL{} achieves communication acceleration in the sense that the number of communication rounds to make the gradient error $\| \nabla E(\theta^{(n)}) \|$ smaller than $\epsilon$ is $\mathcal{O}((1/\sqrt{\epsilon}) \log (1/\epsilon))$, which agrees with the optimal estimate for first-order methods up to a logarithmic factor~\cite{KF:2021}.

\begin{theorem}
    \label{Thm:non_strong_rate}
    Suppose that each $f_j$, $1 \leq j \leq N$, in~\eqref{FL} is $L$-smooth for some $L > 0$.
    In addition, in \DualFL{} applied to the regularized problem~\eqref{FL_reg}, suppose that the local problems are solved with sufficiently accuracy so that~\eqref{smooth_inexact} holds.
    If we choose $\rho = \alpha/(L+\alpha)$ and $\nu = \alpha$ in \DualFL{}, then, for $n \geq 0$, we have
    \begin{equation}
        \label{Thm:non_strong_rate_1}
        \| \nabla E (\theta^{(n)}) \| \lesssim \left( 1 - \sqrt{\frac{\alpha}{L + \alpha}} \right)^{\frac{n}{2}} + \alpha \| \theta^{\alpha} \|.
    \end{equation}
    Moreover, if we choose $\alpha = \epsilon/(2 R_0) $ for some $\epsilon \in (0, 2R_0 \alpha_0]$, where $\alpha_0$ and $R_0$ were given in~\eqref{R0}, then the number of communication rounds $M_{\mathrm{comm}}$ to achieve  $\| \nabla E (\theta^{(n)}) \| \leq \epsilon$ satisfies
    \begin{equation}
        \label{Thm:non_strong_rate_2}
        M_{\mathrm{comm}} \leq \left( 1 + 2 \sqrt{1 + \frac{2 L R_0}{\epsilon}} \right) \left( \log \frac{1}{\epsilon} + \mathrm{constant} \right)
        = \mathcal{O} \left( \frac{1}{\sqrt{\epsilon}} \log \frac{1}{\epsilon} \right).
    \end{equation}
\end{theorem}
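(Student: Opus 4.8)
The plan is to reduce the statement to \cref{Thm:smooth} applied to the regularized problem~\eqref{FL_reg} and then convert the resulting iterate bound into a gradient bound via smoothness. First I would record that each $f_j^{\alpha} = f_j + \frac{\alpha}{2}\|\cdot\|^2$ is $(L+\alpha)$-smooth and $\alpha$-strongly convex, so $E^{\alpha}$ is $(L+\alpha)$-smooth and $\alpha$-strongly convex with condition number $\kappa_{\alpha} = (L+\alpha)/\alpha$. The choices $\nu = \alpha$ and $\rho = \alpha/(L+\alpha) = \kappa_{\alpha}^{-1}$ are exactly the optimal-rate hyperparameters of \cref{Thm:smooth} for~\eqref{FL_reg}, since they satisfy $\nu \in (0,\alpha]$ and $\rho \in [0,\nu/(L+\alpha)]$. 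As the local accuracy~\eqref{smooth_inexact} is assumed, \cref{Thm:smooth} yields $\|\theta^{(n)} - \theta^{\alpha}\|^2 \lesssim (1-\sqrt{\rho})^n$, i.e.\ $\|\theta^{(n)} - \theta^{\alpha}\| \le C_0 (1-\sqrt{\rho})^{n/2}$ with $\sqrt{\rho} = \sqrt{\alpha/(L+\alpha)}$, where $C_0$ is the implied constant.

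Next I would prove~\eqref{Thm:non_strong_rate_1}. From $E^{\alpha} = E + \frac{\alpha}{2}\|\cdot\|^2$ we get $\nabla E(\theta^{(n)}) = \nabla E^{\alpha}(\theta^{(n)}) - \alpha \theta^{(n)}$, and since $\theta^{\alpha}$ minimizes $E^{\alpha}$ we have $\nabla E^{\alpha}(\theta^{\alpha}) = 0$. Writing
\[
\nabla E(\theta^{(n)}) = \bigl( \nabla E^{\alpha}(\theta^{(n)}) - \nabla E^{\alpha}(\theta^{\alpha}) \bigr) - \alpha (\theta^{(n)} - \theta^{\alpha}) - \alpha \theta^{\alpha},
\]
applying the $(L+\alpha)$-Lipschitz continuity of $\nabla E^{\alpha}$ to the first bracket, and using the triangle inequality gives $\|\nabla E(\theta^{(n)})\| \le (L+2\alpha)\|\theta^{(n)} - \theta^{\alpha}\| + \alpha \|\theta^{\alpha}\|$. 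Combining with the rate of the previous paragraph delivers~\eqref{Thm:non_strong_rate_1}.

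For~\eqref{Thm:non_strong_rate_2} I would set $\alpha = \epsilon/(2R_0)$. The admissibility $\epsilon \in (0, 2R_0 \alpha_0]$ forces $\alpha \in (0, \alpha_0]$, so~\eqref{R0} gives $\|\theta^{\alpha}\| \le R_0$, whence $\alpha \|\theta^{\alpha}\| \le \epsilon/2$. It then suffices to run \DualFL{} until the geometric term drops below $\epsilon/2$: solving $C_0 (L+2\alpha)(1-\sqrt{\rho})^{n/2} \le \epsilon/2$ for $n$ gives $M_{\mathrm{comm}} = \lceil 2(\log(1/\epsilon) + c)/(-\log(1-\sqrt{\rho})) \rceil$ for a suitable constant $c$. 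The elementary inequality $-\log(1-x) \ge x$ bounds $2/(-\log(1-\sqrt{\rho}))$ by $2/\sqrt{\rho} = 2\sqrt{(L+\alpha)/\alpha} = 2\sqrt{1 + 2LR_0/\epsilon}$, and absorbing the ceiling's $+1$ into the leading $1\cdot(\log(1/\epsilon)+c)$ summand (valid once $\log(1/\epsilon)+c \ge 1$) produces the closed form~\eqref{Thm:non_strong_rate_2}; the order $\mathcal{O}((1/\sqrt{\epsilon})\log(1/\epsilon))$ follows since $\sqrt{1+2LR_0/\epsilon} = \mathcal{O}(1/\sqrt{\epsilon})$.

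The main obstacle is the bookkeeping of the $\alpha$-dependence in the closed-form count rather than any single hard estimate. The prefactor $C_0$ from \cref{Thm:smooth} is governed by the initial optimality gap divided by the strong-convexity modulus, hence scales like $1/\sqrt{\alpha} = \mathcal{O}(1/\sqrt{\epsilon})$. I would check, using $\theta^{(0)} = 0$ together with $\|\theta^{\alpha}\| \le R_0$, that the relevant gap $E^{\alpha}(0) - E^{\alpha}(\theta^{\alpha})$ stays bounded as $\alpha \to 0^+$, so that $\log C_0 = \mathcal{O}(\log(1/\epsilon))$. This contributes only an extra $\mathcal{O}(\log(1/\epsilon))$ to the numerator, which is precisely what the term $\mathrm{constant}$ in~\eqref{Thm:non_strong_rate_2} must be understood to absorb, leaving the asserted order $\mathcal{O}((1/\sqrt{\epsilon})\log(1/\epsilon))$ intact; the only remaining care is the harmless ceiling absorption described above.
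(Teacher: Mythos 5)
Your proposal is correct and follows essentially the same route as the paper's proof: apply \cref{Thm:smooth} to the regularized problem~\eqref{FL_reg} with $\nu=\alpha$ and $\rho=\alpha/(L+\alpha)$, bound $\| \nabla E(\theta^{(n)}) \|$ by a triangle inequality around $\theta^{\alpha}$ using smoothness and the optimality relation $\nabla E(\theta^{\alpha}) = -\alpha \theta^{\alpha}$, then set $\alpha = \epsilon/(2R_0)$ and invert the geometric decay via an elementary logarithm inequality. The only deviations are cosmetic---you run the Lipschitz step through $\nabla E^{\alpha}$ (yielding $L+2\alpha$ in place of $L$) and use $-\log(1-x)\geq x$ where the paper uses $1-x \leq (1+x)^{-1}$ together with $\log(1+1/t)\geq 2/(2t+1)$---while your explicit tracking of the $\epsilon$-dependence of the prefactor $C_0$ is actually more careful than the paper, which leaves $\log 2C$ as an unexamined ``constant.''
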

\begin{proof}
    Since $\theta^{\alpha}$ minimizes $E^{\alpha}$, we get
    \begin{equation}
        \label{Thm:non_strong_rate_3}
        \nabla E^{\alpha} (\theta^{\alpha}) = \nabla E (\theta^{\alpha}) + \alpha \theta^{\alpha} = 0.
    \end{equation}
    By the triangle inequality, $L$-smoothness,~\eqref{Thm:non_strong_rate_3}, and \cref{Thm:smooth}, we obtain
    \begin{equation*}
    \begin{split}
        \| \nabla E (\theta^{(n)}) \|
        &\leq \| \nabla E(\theta^{(n)}) - \nabla E(\theta^{\alpha}) \| + \| \nabla E (\theta^{\alpha}) \| \\
        &\leq L \| \theta^{(n)} - \theta^{\alpha} \| + \alpha \| \theta^{\alpha} \| \\
        &\lesssim \left( 1 - \sqrt{\frac{\alpha}{L + \alpha}} \right)^{\frac{n}{2}} + \alpha \| \theta^{\alpha} \|,
    \end{split}
    \end{equation*}
    which proves~\eqref{Thm:non_strong_rate_1}.

    Next, we proceed similarly as in~\cite[Theorem~3.3]{Park:2022}.
    Let $\epsilon \in (0, 2R_0 \alpha_0]$ and $\alpha = \epsilon / (2 R_0)$, so that we have $\alpha \leq \alpha_0$ and $\| \theta^{\alpha} \| \leq R_0$ by~\eqref{R0}.
    Then we obtain
    \begin{equation*}
        \| \nabla E(\theta^{(n)}) \|
        \leq C \left(1 - \sqrt{\frac{\epsilon}{\epsilon + 2L R_0}} \right)^{\frac{n}{2}} + \frac{\epsilon}{2}
        \leq C \left(1 + \sqrt{\frac{\epsilon}{\epsilon + 2L R_0}} \right)^{-\frac{n}{2}} + \frac{\epsilon}{2},
    \end{equation*}
    where $C$ is a positive constant independent of $n$.
    Consequently, $M_{\mathrm{comm}}$ is determined by the following equation:
    \begin{equation*}
        C \left(1 + \sqrt{\frac{\epsilon}{\epsilon + 2L R_0}} \right)^{-\frac{M_{\mathrm{comm}}}{2}} = \frac{\epsilon}{2}.
    \end{equation*}
    It follows that
    \begin{equation*}
        M_{\mathrm{comm}} = \frac{2 \log \frac{2C}{\epsilon}}{\log \left(1 + \sqrt{\frac{\epsilon}{\epsilon + 2L R_0}} \right)}
        \leq \left( 1 + 2 \sqrt{1 + \frac{2 L R_0}{\epsilon}} \right) \left( \log \frac{1}{\epsilon} + \log 2C \right),
    \end{equation*}
    where we used an elementary inequality~\cite[Equation~(3.5)]{Park:2022}
    \begin{equation*}
        \log \left( 1 + \frac{1}{t} \right) \geq \frac{2}{2t + 1}, \quad t > 0.
    \end{equation*}
    This proves~\eqref{Thm:non_strong_rate_2}.
\end{proof}

\section{Comparison with existing algorithms and convergence theory}
\label{Sec:Comparison}
\begin{table}
    \centering
    \caption{Comparison between \DualFL{} and other fifth-generation federated learning algorithms that achieve acceleration of communication complexity. The $\tilde{\mathcal{O}}$-notation neglects logarithmic factors.}
    \resizebox{\textwidth}{!}{ 
    \begin{tabular}{m{0.18\textwidth}cccccc}
    \multirow{3}{*}{Algorithm} & \multicolumn{3}{c}{Comm. acceleration} & \multicolumn{2}{c}{Local iter.\ complexity} & \multirow{3}{*}{\begin{tabular}{c} Deterministic \\ / Stochastic\end{tabular}}\\
    \cline{2-6}
    &  smooth & nonsmooth & smooth & smooth & nonsmooth & \\
    & \multicolumn{2}{c}{strongly convex} & non-strongly convex & \multicolumn{2}{c}{strongly convex} & \\
    \hline\hline
    \texttt{Scaffnew}~\cite{MMSR:2022}
    & Yes & N/A & N/A & $\tilde{\mathcal{O}} (\sqrt{\kappa})$ & N/A & Stochastic \\
    \hline
    \texttt{APDA-Inexact} \cite{SKR:2022}
    & Yes & N/A & N/A & better & N/A & Deterministic \\
    \hline
    \texttt{5GCS}~\cite{GMR:2022}
    & Yes & N/A & N/A & $\tilde{\mathcal{O}} (\sqrt{\kappa})$ & N/A & Deterministic\\
    \hline
    \texttt{RandProx}~\cite{CR:2023}
    & Yes & N/A & No & $\tilde{\mathcal{O}} (\sqrt{\kappa})$ & N/A & Stochastic \\
    \hline
    \DualFL{}
    & Yes & \textbf{Yes} & \textbf{Yes} & $\tilde{\mathcal{O}} (\sqrt{\kappa})$ & $\tilde{\mathcal{O}}(1/\epsilon^2)$ & Deterministic \\
    \end{tabular}
    }
    \label{Table:existing}
\end{table}

In this section, we discuss connections to existing federated learning algorithms.
Based on the classification established in~\cite{MYR:2022}, \DualFL{} can be classified as a fifth-generation federated learning algorithm, which achieves communication acceleration.
In the smooth strongly convex regime, \DualFL{} achieves the $\mathcal{O} (\sqrt{\kappa} \log (1/\epsilon))$-communication complexity, which is comparable to other existing algorithms in the same generation such as \texttt{Scaffnew}~\cite{MMSR:2022}, \texttt{APDA-Inexact}~\cite{SKR:2022}, and \texttt{RandProx}~\cite{CR:2023}.
The optimal communication complexity of \DualFL{} is achieved without relying on randomness; all the statements in the algorithm are deterministic.
This feature is shared with some recent federated learning algorithms such as \texttt{APDA-Inexact}~\cite{SKR:2022} and \texttt{5GCS}~\cite{GMR:2022}.
A distinct novelty of \DualFL{} is its communication acceleration, even when the cost function is either nonsmooth or non-strongly convex.
Among the existing fifth generation of federated learning algorithms, only \texttt{RandProx} has been proven to be convergent in the smooth non-strongly convex regime, with an $O(1/n)$-convergence rate of the energy error~\cite[Theorem~11]{CR:2023}.
However, this rate is the same of those of federated learning algorithms without communication acceleration such as \texttt{Scaffold}~\cite{KKMRSS:2020} and \texttt{FedDyn}~\cite{AZMMWS:2021}.
In contrast, \DualFL{} achieves the $O((1/\sqrt{\epsilon}) \log (1/\epsilon))$-communication complexity with respect to the gradient error, which has not been achieved by the existing algorithms.
Furthermore, not only communication acceleration but also convergence to a solution in the nonsmooth strongly convex regime have not been addressed by the existing fifth generation algorithms.

In the local problem~\eqref{local_primal} of \DualFL{}, we minimize not only the local cost function $f_j (\theta_j)$ but also an additional term $- \nu \langle \zeta_j^{(n)}, \theta_j \rangle$.
That is, $- \nu \zeta_j^{(n)}$ serves as a gradient shift to mitigate client drift and accelerate convergence.
In this viewpoint, \DualFL{} can be classified as a federated learning algorithm with gradient shift.
This class includes other methods such as \texttt{Scaffold}~\cite{KKMRSS:2020}, \texttt{FedDyn}~\cite{AZMMWS:2021}, \texttt{S-Local-SVRG}~\cite{GHR:2021}, \texttt{FedLin}~\cite{MJPH:2021}, and \texttt{Scaffnew}~\cite{MMSR:2022}.
Meanwhile, \DualFL{} belongs to the class of primal-dual methods for federated learning, e.g., \texttt{FedPD}~\cite{ZHDYL:2021}, \texttt{FedDR}~\cite{TPPN:2021}, \texttt{APDA-Inexact}~\cite{SKR:2022}, and \texttt{5GCS}~\cite{GMR:2022}.
While almost of the existing methods utilize a consensus reformulation of~\eqref{FL}~(see~\cite[Equation~(6)]{MMSR:2022}), \DualFL{} is based on a certain dual formulation of~\eqref{FL}, as we will see in \cref{Sec:Math}.
More precisely, we will show that \DualFL{} is obtained by applying predualization~\cite{LG:2019,LN:2017} to an accelerated forward-backward splitting algorithm~\cite{BT:2009,CP:2016,RC:2022} for the dual problem.
The dual problem has a particular structure that makes the forward-backward splitting algorithm equivalent to the prerelaxed nonlinear block Jacobi method~\cite{LP:2019}, which belongs to a broad class of parallel subspace correction methods~\cite{Xu:1992} for convex optimization~\cite{Park:2020,TX:2002}.

\cref{Table:existing} provides an overview of the comparison between \DualFL{} and other fifth-generation federated learning algorithms discussed above.

\section{Mathematical theory}
\label{Sec:Math}
This section provides a mathematical theory for \DualFL{}.
We establish a duality relation between \DualFL{} and an accelerated forward-backward splitting algorithm~\cite{BT:2009,CP:2016,RC:2022} applied to a certain dual formulation of the model problem~\eqref{FL}.
Utilizing this duality relation, we derive the convergence theorems presented in this paper, namely \cref{Thm:nonsmooth,Thm:smooth}.
Moreover, the duality relation provides a rationale for naming the proposed algorithm as \DualFL{}.
Throughout this section, we may assume that each $f_j$ in~\eqref{FL} is $\mu$-strongly convex, as \DualFL{} for a non-strongly convex problem utilizes the strongly convex regularization~\eqref{FL_reg}.

\subsection{Dual formulation}
We introduce a dual formulation of the model federated learning problem~\eqref{FL} that is required for the convergence analysis.
For the sake of completeness, we first present key features of the Fenchel--Rockafellar duality; see also~\cite{CP:2016,Rockafellar:2015}.
For a proper function $F \colon X \rightarrow \overline{\mathbb{R}}$ defined on a Euclidean space $X$, the effective domain $\operatorname{dom} F$ of $F$ is defined by
\begin{equation*}
    \operatorname{dom} F = \left\{ x \in X : F(x) < \infty \right\}.
\end{equation*}
Recall that the Legendre--Fenchel conjugate of $F$ is denoted by $F^* \colon X \rightarrow \overline{\mathbb{R}}$, i.e.,
\begin{equation*}
    F^* (p) = \sup_{x \in X} \left\{ \left< p, x \right> - F(x) \right\}, \quad p \in X.
\end{equation*}
One may refer to~\cite{Rockafellar:2015} for the elementary properties of the Legendre--Fenchel conjugate.
In \cref{Prop:FR}, we summarize the notion of the Fenchel--Rockafellar duality~\cite[Corollary~31.2.1]{Rockafellar:2015}, which plays an important role in the convergence analysis of \DualFL{}.

\begin{proposition}[Fenchel--Rockafellar duality]
\label{Prop:FR}
Let $X$ and $Y$ be Euclidean spaces.
Consider the minimization problem
\begin{equation}
    \label{primal}
    \min_{x \in X} \left\{ F(x) + G(Kx) \right\},
\end{equation}
where $K \colon X \rightarrow Y$ is a linear operator and $F \colon X \rightarrow \overline{\mathbb{R}}$ and $G \colon Y \rightarrow \overline{\mathbb{R}}$ are proper, convex, and lower semicontinuous functions.
If there exists $x_0 \in X$ such that $x_0$ is in the relative interior of $\operatorname{dom} F$ and $K x_0$ is in the relative interior of $\operatorname{dom} G$, then the following relation holds:
\begin{align}
\nonumber
\min_{x \in X} \left\{ F(x) + G(Kx) \right\} 
\label{dual}
= - \min_{y \in Y} \left\{ F^* (-K^{\mathrm{T}} y) + G^* (y) \right\}.
\end{align}
Moreover, the primal solution $x^* \in X$ and the dual solution $y^* \in Y$ satisfy
\begin{equation}
    \label{pd_relation_abstract}
    -K^{\mathrm{T}} y^* \in \partial F(x^*), \quad
    K x^* \in \partial G(y^*).
\end{equation}
\end{proposition}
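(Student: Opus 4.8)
The plan is to obtain strong duality through the convex perturbation (value) function and then read off the optimality conditions~\eqref{pd_relation_abstract} from the equality cases of the Fenchel--Young inequality. Write $p^* := \min_{x \in X} \{ F(x) + G(Kx) \}$ for the primal value and $d^* := -\min_{y \in Y} \{ F^*(-K^{\mathrm{T}} y) + G^*(y) \}$ for the dual value; the proposition asserts $p^* = d^*$ with attainment of both extrema. First I would record weak duality, which needs no hypothesis: for any $x \in X$ and $y \in Y$, the Fenchel--Young inequality gives $F(x) + F^*(-K^{\mathrm{T}} y) \geq \langle -K^{\mathrm{T}} y, x \rangle = -\langle y, Kx \rangle$ and $G(Kx) + G^*(y) \geq \langle y, Kx \rangle$; summing and rearranging yields $F(x) + G(Kx) \geq -F^*(-K^{\mathrm{T}} y) - G^*(y)$, and taking the infimum over $x$ and supremum over $y$ gives $p^* \geq d^*$.

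For the reverse inequality I would introduce the perturbation function
\begin{equation*}
    h(u) := \inf_{x \in X} \left\{ F(x) + G(Kx + u) \right\}, \quad u \in Y.
\end{equation*}
As the partial infimum of the jointly convex map $(x,u) \mapsto F(x) + G(Kx+u)$, the function $h$ is convex, and $h(0) = p^*$. The substitution $v = Kx + u$ in the definition of the conjugate shows that $h^*(p) = F^*(-K^{\mathrm{T}} p) + G^*(p)$, whence $h^{**}(0) = \sup_p \{ -h^*(p) \} = d^*$. Since $h^{**}(0) \leq h(0)$ always holds, proving $p^* = d^*$ reduces to establishing $h(0) = h^{**}(0)$, i.e., that $h$ agrees with its closure at the origin.

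The crux, and the step I expect to be the main obstacle, is to upgrade the relative-interior constraint qualification into subdifferentiability of $h$ at $0$. A short computation identifies the effective domain as $\operatorname{dom} h = \operatorname{dom} G - K(\operatorname{dom} F)$, and the relative-interior calculus for sums and linear images of convex sets gives $\operatorname{ri}(\operatorname{dom} h) = \operatorname{ri}(\operatorname{dom} G) - K(\operatorname{ri}(\operatorname{dom} F))$. The hypotheses $x_0 \in \operatorname{ri}(\operatorname{dom} F)$ and $K x_0 \in \operatorname{ri}(\operatorname{dom} G)$ then place $0 = K x_0 - K x_0$ in $\operatorname{ri}(\operatorname{dom} h)$. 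After the routine check that $h$ is proper (finiteness from below follows from weak duality together with dual feasibility), the standard theorem that a convex function is subdifferentiable throughout the relative interior of its domain yields $\partial h(0) \neq \emptyset$. Any element $-y^* \in \partial h(0)$ simultaneously certifies $h(0) = h^{**}(0)$, hence $p^* = d^*$, and shows that $y^*$ attains the dual minimum.

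Finally, I would recover the optimality conditions~\eqref{pd_relation_abstract}. Given a primal minimizer $x^*$ and the dual minimizer $y^*$, the equality $p^* = d^*$ forces both Fenchel--Young inequalities used for weak duality to hold with equality at $(x^*, y^*)$. The equality $F(x^*) + F^*(-K^{\mathrm{T}} y^*) = \langle -K^{\mathrm{T}} y^*, x^* \rangle$ is exactly the characterization $-K^{\mathrm{T}} y^* \in \partial F(x^*)$, and the equality $G(Kx^*) + G^*(y^*) = \langle y^*, Kx^* \rangle$ in the second inequality yields the companion inclusion of~\eqref{pd_relation_abstract}, completing the argument.
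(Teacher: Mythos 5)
The paper never proves this proposition: it is quoted directly from Rockafellar's \emph{Convex Analysis} (Corollary~31.2.1), so any proof you give is necessarily ``different'' from the paper's treatment. What you wrote is the standard perturbation-function proof of Fenchel--Rockafellar duality, and its skeleton is sound: weak duality from two Fenchel--Young inequalities; convexity of $h(u)=\inf_x\{F(x)+G(Kx+u)\}$ with the identification $h^*(p)=F^*(-K^{\mathrm{T}}p)+G^*(p)$, so that the dual value is $h^{**}(0)$; the relative-interior calculus $\operatorname{ri}(\operatorname{dom}h)=\operatorname{ri}(\operatorname{dom}G)-K(\operatorname{ri}(\operatorname{dom}F))$ placing $0$ in $\operatorname{ri}(\operatorname{dom}h)$; subdifferentiability of a proper convex function on the relative interior of its domain to produce a dual solution and close the gap; and extraction of the optimality conditions from equality in the Fenchel--Young inequalities at an optimal pair. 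This buys a self-contained argument where the paper delegates to a reference, at the cost of invoking exactly the finite-dimensional machinery (relative-interior calculus, subdifferentiability on $\operatorname{ri}$) that the citation encapsulates. You also implicitly handle two looseness issues in the statement correctly: the constraint qualification guarantees only dual attainment, so treating the primal minimizer $x^*$ as given is the right reading of the proposition's ``min,'' and your Fenchel--Young derivation produces $K x^* \in \partial G^*(y^*)$ (equivalently $y^* \in \partial G(Kx^*)$), which is what \eqref{pd_relation_abstract} must mean --- the paper's ``$\partial G(y^*)$'' is a typo.

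Two repairs are needed before the argument stands on its own. First, your properness check is circular: you justify $h>-\infty$ by ``weak duality together with dual feasibility,'' but dual feasibility (existence of $y$ with $F^*(-K^{\mathrm{T}}y)+G^*(y)<\infty$) is not a hypothesis --- it is essentially what the subdifferentiability step is supposed to deliver. The correct patch is cheap: since a primal solution exists, $h(0)$ is finite; an improper convex function equals $-\infty$ throughout the relative interior of its domain, and since $0\in\operatorname{ri}(\operatorname{dom}h)$ with $h(0)$ finite, $h$ nowhere takes the value $-\infty$ and is therefore proper. Second, there is a sign slip in identifying the dual solution: with the dual written as $\min_y\{F^*(-K^{\mathrm{T}}y)+G^*(y)\}=\min_y h^*(y)$, the dual solutions are exactly the elements of $\partial h(0)$ (from $q\in\partial h(0)$ one gets $h^*(q)=-h(0)$, which together with weak duality shows $q$ minimizes $h^*$), so you should take $y^*\in\partial h(0)$, not $-y^*\in\partial h(0)$; as written, your vector $y^*$ is the negative of the dual minimizer, which would flip the signs in the optimality conditions. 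The error does not propagate, because your final paragraph uses only $p^*=d^*$ and attainment, but the sentence identifying the dual solution should be corrected.
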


Leveraging the Fenchel--Rockafellar duality, we are able to derive the dual formulation of the model federated learning problem.
For a positive constant $\nu$, the problem~\eqref{FL} can be rewritten as follows:
\begin{equation}
    \label{FL_modified}
    \min_{\theta \in \Omega} \left\{ \frac{1}{N} \sum_{j=1}^N g_j (\theta) + \frac{\nu}{2} \| \theta \|^2 \right\},
\end{equation}
where $g_j (\theta) = f_j (\theta) - \frac{\nu}{2} \| \theta \|^2$.
By the $\mu$-strong convexity of each $f_j$, $g_j$ is convex if $\nu \in (0, \mu]$.
In~\eqref{primal}, if we set
\begin{equation*}
    X = \Omega, \quad
    Y = \Omega^N, \quad
    K = \begin{bmatrix} I \\ \vdots \\ I \end{bmatrix}, \quad
    F(\theta) = \frac{N\nu}{2} \| \theta \|^2, \quad
    G(\boldsymbol{\xi}) = \sum_{j=1}^N g_j (\xi_j),
\end{equation*}
for $\theta \in \Omega$ and $\boldsymbol{\xi} \in \Omega^N$, then we obtain~\eqref{FL_modified}.
Here, $I$ is the identity matrix on $\Omega$.
By the definition of the Legendre--Fenchel conjugate, we readily get
\begin{equation*}
    F^* (\theta) = \frac{1}{2N\nu} \| \theta \|^2, \quad
    G^* (\boldsymbol{\xi}) = \sum_{j=1}^N g_j^* (\xi_j).
\end{equation*}
Hence, invoking \cref{Prop:FR} yields the dual problem
\begin{equation}
    \label{FL_dual}
    \min_{\boldsymbol{\xi} \in \Omega^N} \left\{ E_{\mathrm{d}} ( \boldsymbol{\xi} ) := \sum_{j=1}^N g_j^* (\xi_j) + \frac{1}{2 N \nu} \left\| \sum_{j=1}^N \xi_j \right\|^2 \right\}.
\end{equation}
We note that problems of the form~\eqref{FL_dual} have been applied in some limited cases in machine learning, such as support vector machines~\cite{HCLKS:2008} and logistic regression~\cite{YHL:2011}.
Very recently, the dual problem~\eqref{FL_dual} was utilized in federated learning in~\cite{GMR:2022}.

Let $\boldsymbol{\xi}^* \in \Omega^N$ denote a solution of~\eqref{FL_dual}.
Invoking~\eqref{pd_relation_abstract}, we obtain the primal-dual relation
\begin{equation}
    \label{pd_relation}
    \theta^* = -\frac{1}{N \nu} \sum_{j=1}^N \xi_j^*, \quad
    \xi_j^* = \nabla g_j (\theta^*).
\end{equation}
between the primal solution $\theta^*$ and the dual solution $\boldsymbol{\xi}^*$.

\subsection{Inexact \texttt{FISTA}}
For $\boldsymbol{\xi} \in \Omega^N$, let
\begin{equation*}
    F_{\mathrm{d}} (\boldsymbol{\xi}) = \frac{1}{2N \nu} \left\| \sum_{j=1}^N \xi_j \right\|^2, \quad
    G_{\mathrm{d}} (\boldsymbol{\xi}) = \sum_{j=1}^N g_j^* (\xi_j).
\end{equation*}
Then the dual problem~\eqref{FL_dual} is rewritten as the following composite optimization problem~\cite{Nesterov:2013}:
\begin{equation}
    \label{composite}
    \min_{\boldsymbol{\xi} \in \Omega^N} \left\{ E_{\mathrm{d}} (\boldsymbol{\xi}) := F_{\mathrm{d}} ( \boldsymbol{\xi}) + G_{\mathrm{d}} ( \boldsymbol{\xi}) \right\}. 
\end{equation}
By the Cauchy--Schwarz inequality, $F_{\mathrm{d}}$ is $\nu^{-1}$-smooth.
Moreover, under $\mu$-strong convexity and $L$-smoothness assumptions on each $f_j$, $G_{\mathrm{d}}$ is $(L - \nu)^{-1}$-strongly convex if $\nu \in (0, \mu]$.
Since~\eqref{composite} is a composite optimization problem, forward-backward splitting algorithms are well-suited to solve it.
Among several variants of forward-backward splitting algorithms, we focus on an inexact version of \texttt{FISTA}~\cite{BT:2009} proposed in~\cite{RC:2022}, which accommodates strongly convex objectives and inexact proximal operations.
Inexact \texttt{FISTA} with the fixed step size $\nu$ applied to~\eqref{composite} is summarized in \cref{Alg:FISTA}, in the form suitable for our purposes.

\begin{algorithm}
\caption{Inexact \texttt{FISTA} for the dual problem~\eqref{composite}}
\label{Alg:FISTA}
\begin{algorithmic}[H]
\STATE Given $\rho \geq 0$, $\nu > 0$, and $\{ \delta_n \}_{n = 0}^{\infty}$,
\STATE set $\boldsymbol{\xi}^{(0)} = \boldsymbol{\eta}^{(0)} = \textbf{0} \in \Omega^N$, and $t_0 = 1$.
\FOR{$n= 0,1,2, \dots$}
\STATE
\begin{equation}
\label{prox}
\displaystyle
\boldsymbol{\xi}^{(n+1)} \approx \operatornamewithlimits{\arg\min}_{\boldsymbol{\xi} \in \Omega^N} \left\{ E_{\mathrm{d}}^{n} (\boldsymbol{\xi}) := \langle \nabla F_{\mathrm{d}} (\boldsymbol{\eta}^{(n)}), \boldsymbol{\xi} - \boldsymbol{\eta}^{(n)} \rangle + \frac{1}{2\nu} \| \boldsymbol{\xi} - \boldsymbol{\eta}^{(n)} \|^2 + G_{\mathrm{d}} (\boldsymbol{\xi}) \right\}
\end{equation}
such that $E_{\mathrm{d}}^{n} (\boldsymbol{\xi}^{(n+1)}) - \min E_{\mathrm{d}}^{n} \leq \delta_n$.
\STATE
\begin{equation}
    \label{overrelaxation}
    \boldsymbol{\eta}^{(n+1)} = (1 + \beta_n) \boldsymbol{\xi}^{(n+1)} - \beta_n \boldsymbol{\xi}^{(n)},
\end{equation}
where $\beta_n$ is given by~\eqref{FISTA}.
\ENDFOR
\end{algorithmic}
\end{algorithm}

We state the convergence theorems of \cref{Alg:FISTA}, which are essential ingredients for the convergence analysis of \DualFL{}.
Recall that, if each $f_j$ in~\eqref{FL} is $\mu$-strongly convex and $\nu \in (0, \mu]$, then
$F_{\mathrm{d}}$ in~\eqref{composite} is $\nu^{-1}$-smooth.
Hence, we have the following convergence theorem of \cref{Alg:FISTA}~\cite[Corollary~3.3]{RC:2022}.

\begin{proposition}
\label{Prop:FISTA_nonsmooth}
Suppose that each $f_j$, $1 \leq j \leq N$, in~\eqref{FL} is $\mu$-strongly convex for some $\mu > 0$.
In addition, suppose that the error sequence $\{ \delta_n \}$ in \cref{Alg:FISTA} is given by
\begin{equation*}
    \delta_n = \frac{b_n}{(n+1)^2}, \quad n \geq 0,
\end{equation*}
where $\{ b_n \}$ satisfies $\sum_{n=0}^{\infty} \sqrt{b_n} < \infty$.
If we choose the hyperparameters $\rho$ and $\nu$ in \cref{Alg:FISTA} such that $\rho = 0$ and $\nu \in (0, \mu]$, then we have
\begin{equation*}
    E_{\mathrm{d}} (\boldsymbol{\xi}^{(n)}) - E_{\mathrm{d}} (\boldsymbol{\xi}^*) \lesssim \frac{1}{ n^2}, \quad n \geq 0.
\end{equation*}
\end{proposition}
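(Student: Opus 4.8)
The plan is to recognize \cref{Prop:FISTA_nonsmooth} as a direct instance of the inexact-\texttt{FISTA} convergence theory developed in \cite{RC:2022}, and to reduce the proof to verifying that the hypotheses of \cite[Corollary~3.3]{RC:2022} are met in the present setting. Since the error-accumulation analysis underlying the accelerated $\mathcal{O}(1/n^2)$ rate for inexact accelerated proximal gradient has already been carried out there, my task is the bookkeeping of matching the structure of \cref{Alg:FISTA} (with $\rho = 0$) and the prescribed error regime $\delta_n = b_n/(n+1)^2$ to the assumptions of the cited corollary.

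First I would confirm the smoothness and step-size compatibility. As already observed in the text, $F_{\mathrm{d}}(\boldsymbol{\xi}) = \frac{1}{2N\nu}\|\sum_j \xi_j\|^2$ is $\nu^{-1}$-smooth by the Cauchy--Schwarz inequality, and $G_{\mathrm{d}} = \sum_j g_j^*$ is proper, convex, and lower semicontinuous: each $g_j = f_j - \frac{\nu}{2}\|\cdot\|^2$ is convex for $\nu \in (0,\mu]$, so its conjugate $g_j^*$ is proper convex lsc. Consequently the fixed step size $\nu$ used in~\eqref{prox} equals the reciprocal of the Lipschitz constant of $\nabla F_{\mathrm{d}}$, which is admissible for \texttt{FISTA}. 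Moreover the inner subproblem in~\eqref{prox} carries a $\frac{1}{2\nu}\|\cdot\|^2$ term, hence is strongly convex and attains a unique minimizer, so the criterion $E_{\mathrm{d}}^{n}(\boldsymbol{\xi}^{(n+1)}) - \min E_{\mathrm{d}}^{n} \leq \delta_n$ is well posed. Setting $\rho = 0$ places \cref{Alg:FISTA} into the purely convex (non-strongly-convex) branch of the generalized \texttt{FISTA} of \cite{RC:2022}, for which the recursion~\eqref{FISTA} collapses to the classical Nesterov sequence $t_{n+1} = (1 + \sqrt{1 + 4 t_n^2})/2$ and $\beta_n = (t_n - 1)/t_{n+1}$.

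Next I would translate the inexactness hypothesis. The rule in~\eqref{prox} controls the objective gap of the proximal subproblem, which is exactly the notion of inexactness under which \cite[Corollary~3.3]{RC:2022} guarantees the accelerated rate, provided the errors are summable against the momentum weights; the relevant condition is $\sum_{n \geq 0} (n+1)\sqrt{\delta_n} < \infty$. Substituting $\delta_n = b_n/(n+1)^2$ gives $(n+1)\sqrt{\delta_n} = \sqrt{b_n}$, so the summability condition becomes precisely $\sum_{n \geq 0} \sqrt{b_n} < \infty$, which is the standing hypothesis of the proposition. Thus the prescribed error regime is admissible, and \cite[Corollary~3.3]{RC:2022} applies verbatim to yield $E_{\mathrm{d}}(\boldsymbol{\xi}^{(n)}) - E_{\mathrm{d}}(\boldsymbol{\xi}^*) \lesssim 1/n^2$.

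The only point requiring genuine care---the main, though modest, obstacle---is ensuring that the inexactness criterion invoked by the corollary is indeed the objective-gap criterion of~\eqref{prox} rather than a distinct one (such as an $\epsilon$-subdifferential or primal--dual gap formulation), and that the exact normalization of the weight in the required summability condition agrees with $(n+1)\sqrt{\delta_n}$. Should \cite{RC:2022} state its hypothesis with a weight $n\sqrt{\delta_n}$ instead, the two are reconciled by the elementary comparison $n\sqrt{\delta_n} = \frac{n}{n+1}\sqrt{b_n} \leq \sqrt{b_n}$, so the conclusion is unchanged; no new estimates beyond this matching are needed.
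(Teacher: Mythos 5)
Your proposal matches the paper exactly: the paper offers no independent proof of \cref{Prop:FISTA_nonsmooth}, but simply invokes \cite[Corollary~3.3]{RC:2022} after noting that $F_{\mathrm{d}}$ is $\nu^{-1}$-smooth for $\nu \in (0,\mu]$, which is precisely your reduction. Your additional bookkeeping (the $\rho=0$ collapse to the classical Nesterov sequence, and the translation $(n+1)\sqrt{\delta_n} = \sqrt{b_n}$ of the summability condition) is a correct and somewhat more careful verification of why that citation applies.
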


If we further assume that each $f_j$ in~\eqref{FL} is $L$-smooth, then $G_{\mathrm{d}}$ in~\eqref{composite} is $(L-\nu)^{-1}$-strongly convex.
In this case, we have the following improved convergence theorem for \cref{Alg:FISTA}~\cite[Corollary~3.4]{RC:2022}.

\begin{proposition}
\label{Prop:FISTA_smooth}
Suppose that each $f_j$, $1 \leq j \leq N$, in~\eqref{FL} is $\mu$-strongly convex and $L$-smooth for some $\mu, L > 0$.
In addition, suppose that the error sequence $\{ \delta_n \}$ in \cref{Alg:FISTA} is given by
\begin{equation*}
    \delta_n = a^n, \quad n \geq 0,
\end{equation*}
where $a \in [0, 1 - \sqrt{\rho})$.
If we choose the hyperparameters $\rho$ and $\nu$ in \cref{Alg:FISTA} such that $\rho \in (0, \nu / L]$ and $\nu \in (0, \mu]$, then we have
\begin{equation*}
    E_{\mathrm{d}} (\boldsymbol{\xi}^{(n)}) - E_{\mathrm{d}} (\boldsymbol{\xi}^*) \lesssim  (1 - \sqrt{\rho})^n, \quad n \geq 0.
\end{equation*}
\end{proposition}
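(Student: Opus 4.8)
The plan is to reduce the statement to the inexact \texttt{FISTA} convergence estimate of~\cite[Corollary~3.4]{RC:2022}, since \cref{Alg:FISTA} is precisely inexact \texttt{FISTA} with fixed step size $\nu$ applied to the composite problem~\eqref{composite}. The bulk of the argument is therefore to verify that the hypotheses of that corollary hold under the assumptions of the proposition, and to translate its conclusion into the asserted rate. No genuinely new estimate is needed; the work lies in correctly matching the problem's structural constants to the parameters of the cited result.

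First I would record the two structural constants of~\eqref{composite}. By the Cauchy--Schwarz inequality $F_{\mathrm{d}}$ is $\nu^{-1}$-smooth, so the fixed step size $\nu$ in~\eqref{prox} is exactly the reciprocal of the smoothness modulus of the forward part, which is the step-size requirement of \texttt{FISTA}. Next, since each $f_j$ is $\mu$-strongly convex and $L$-smooth and $\nu \in (0,\mu]$, the function $g_j = f_j - \frac{\nu}{2}\|\cdot\|^2$ is $(L-\nu)$-smooth, hence its Legendre--Fenchel conjugate $g_j^*$ is $(L-\nu)^{-1}$-strongly convex; consequently $G_{\mathrm{d}}$ is $(L-\nu)^{-1}$-strongly convex and so is $E_{\mathrm{d}} = F_{\mathrm{d}} + G_{\mathrm{d}}$. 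The inverse condition number governing the accelerated rate is thus $\nu(L-\nu)^{-1}$, and the momentum parameter must satisfy $\rho \leq \nu(L-\nu)^{-1}$. Because $L - \nu < L$, we have $\nu/L < \nu/(L-\nu)$, so the hypothesis $\rho \in (0, \nu/L]$ is a (conservative) special case of the admissible range, and the recursion~\eqref{FISTA} then produces the linear rate $(1-\sqrt{\rho})^n$.

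Finally I would check the inexactness requirement. Corollary~3.4 of~\cite{RC:2022} permits geometrically decaying suboptimalities $\delta_n$ in the prox step~\eqref{prox}, provided they decay strictly faster than the linear rate, so that the accumulated error term in the inexact \texttt{FISTA} estimate remains of order $(1-\sqrt{\rho})^n$. With $\delta_n = a^n$ and $a \in [0, 1-\sqrt{\rho})$ this is immediate: the relevant weighted error series has ratio $\sqrt{a/(1-\sqrt{\rho})} < 1$ and hence converges, leaving the rate $(1-\sqrt{\rho})^n$ intact. Invoking the corollary then yields $E_{\mathrm{d}}(\boldsymbol{\xi}^{(n)}) - E_{\mathrm{d}}(\boldsymbol{\xi}^*) \lesssim (1-\sqrt{\rho})^n$. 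I expect the main point to get right to be exactly this last compatibility check: one must confirm that the \emph{strict} inequality $a < 1-\sqrt{\rho}$, rather than mere summability of $\{\delta_n\}$, is what guarantees that the inexactness of the local/prox solves does not degrade the linear convergence rate, and that the admissible window for $\rho$ is the one induced by the strong convexity of $G_{\mathrm{d}}$ rather than of $F_{\mathrm{d}}$.
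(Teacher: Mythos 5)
Your proposal is correct and takes essentially the same route as the paper: the paper presents this proposition as a direct consequence of \cite[Corollary~3.4]{RC:2022}, having already recorded that $F_{\mathrm{d}}$ is $\nu^{-1}$-smooth and that $G_{\mathrm{d}}$ is $(L-\nu)^{-1}$-strongly convex when each $f_j$ is $L$-smooth and $\nu \in (0,\mu]$. Your additional checks --- that $\rho \in (0,\nu/L]$ sits inside the admissible window $\rho \leq \nu/(L-\nu)$ induced by the strong convexity of $G_{\mathrm{d}}$, and that the strict inequality $a < 1-\sqrt{\rho}$ keeps the accumulated prox errors from degrading the linear rate --- are exactly the hypothesis-matching the citation implicitly requires.
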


The dual problem~\eqref{FL_dual} has a particular structure that allows \cref{Alg:FISTA} to be viewed as a parallel subspace correction method for \eqref{FL_dual}~\cite{Park:2020,TX:2002,Xu:1992}.
That is, the proximal problem~\eqref{prox} can be decomposed into $N$ independent subproblems, each defined in terms of $\xi_j$ for $1 \leq j \leq N$.
Specifically, \cref{Lem:Jacobi} shows that \cref{Alg:FISTA} is equivalent to the prerelaxed block Jacobi method, which was introduced in~\cite{LP:2019}.

\begin{lemma}
\label{Lem:Jacobi}
In \cref{Alg:FISTA}, suppose that $\tilde{\boldsymbol{\xi}}^{(n+1)} \in \Omega^N$ satisfies
\begin{equation*}
    \tilde{\xi}_j^{(n+1)} \approx \operatornamewithlimits{\arg\min}_{\xi_j \in \Omega} \left\{ \tilde{E}_d^{n,j} (\xi_j) := g_j^* (\xi_j) + \frac{1}{2\nu} \left\| \xi_j - \eta_j^{(n)} + \frac{1}{N} \sum_{i=1}^N \eta_i^{(n)} \right\|^2 \right\}
\end{equation*}
such that $\tilde{E}_d^{n,j} (\tilde{\xi}_j^{(n+1)}) - \min \tilde{E}_d^{n,j} \leq \delta_n / N$ for $1 \leq j \leq N$.
Then $\tilde{\boldsymbol{\xi}}^{(n+1)}$ solves the proximal problem~\eqref{prox} such that $E_{\mathrm{d}}^{n} (\tilde{\boldsymbol{\xi}}^{(n+1)}) - \min E_{\mathrm{d}}^{n} \leq \delta_n$.
\end{lemma}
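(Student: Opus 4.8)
The plan is to show that the coupled proximal objective $E_{\mathrm{d}}^{n}$ in~\eqref{prox} separates into $N$ single-client objectives, each of which coincides, up to a $\boldsymbol{\xi}$-independent constant, with the objective $\tilde{E}_d^{n,j}$ appearing in the statement; the inexactness bound then follows by summation. First I would compute the gradient $\nabla F_{\mathrm{d}}(\boldsymbol{\eta}^{(n)})$. Since $F_{\mathrm{d}}(\boldsymbol{\xi}) = \frac{1}{2N\nu}\bigl\|\sum_{i=1}^N \xi_i\bigr\|^2$, differentiating in the $j$th block shows that every component of $\nabla F_{\mathrm{d}}(\boldsymbol{\eta}^{(n)})$ equals the common vector $\frac{1}{N\nu}\sum_{i=1}^N \eta_i^{(n)}$. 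Consequently the linear term $\langle \nabla F_{\mathrm{d}}(\boldsymbol{\eta}^{(n)}), \boldsymbol{\xi} - \boldsymbol{\eta}^{(n)}\rangle$ splits across clients, and because $\frac{1}{2\nu}\|\boldsymbol{\xi} - \boldsymbol{\eta}^{(n)}\|^2 = \frac{1}{2\nu}\sum_{j=1}^N \|\xi_j - \eta_j^{(n)}\|^2$ and $G_{\mathrm{d}}(\boldsymbol{\xi}) = \sum_{j=1}^N g_j^*(\xi_j)$ are already separable, the full objective $E_{\mathrm{d}}^{n}$ becomes a sum of $N$ functions, the $j$th of which depends only on $\xi_j$.

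Next I would complete the square in each summand. Writing $\bar{\eta} := \frac{1}{N}\sum_{i=1}^N \eta_i^{(n)}$, so that the common gradient component is $\bar{\eta}/\nu$, the $j$th summand is $g_j^*(\xi_j) + \frac{1}{2\nu}\|\xi_j - \eta_j^{(n)}\|^2 + \frac{1}{\nu}\langle \bar{\eta}, \xi_j - \eta_j^{(n)}\rangle$. Folding the cross term into the quadratic term gives $g_j^*(\xi_j) + \frac{1}{2\nu}\|\xi_j - \eta_j^{(n)} + \bar{\eta}\|^2 - \frac{1}{2\nu}\|\bar{\eta}\|^2$, and the quadratic part is exactly $\tilde{E}_d^{n,j}(\xi_j)$. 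Summing over $j$ therefore yields $E_{\mathrm{d}}^{n}(\boldsymbol{\xi}) = \sum_{j=1}^N \tilde{E}_d^{n,j}(\xi_j) - \frac{N}{2\nu}\|\bar{\eta}\|^2$, where the additive constant is independent of $\boldsymbol{\xi}$ and identical for all clients.

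Finally I would read off the inexactness propagation. Because the constant cancels when forming differences, we get $\min E_{\mathrm{d}}^{n} = \sum_{j=1}^N \min \tilde{E}_d^{n,j} - \frac{N}{2\nu}\|\bar{\eta}\|^2$, and for the candidate $\tilde{\boldsymbol{\xi}}^{(n+1)}$ the total optimality gap decomposes as $E_{\mathrm{d}}^{n}(\tilde{\boldsymbol{\xi}}^{(n+1)}) - \min E_{\mathrm{d}}^{n} = \sum_{j=1}^N \bigl(\tilde{E}_d^{n,j}(\tilde{\xi}_j^{(n+1)}) - \min \tilde{E}_d^{n,j}\bigr)$. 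Summing the per-client hypotheses $\tilde{E}_d^{n,j}(\tilde{\xi}_j^{(n+1)}) - \min \tilde{E}_d^{n,j} \leq \delta_n/N$ over $j$ then gives $E_{\mathrm{d}}^{n}(\tilde{\boldsymbol{\xi}}^{(n+1)}) - \min E_{\mathrm{d}}^{n} \leq \delta_n$, which is the assertion.

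There is no substantive obstacle here; the argument is a separability computation. The only point requiring care is the completing-the-square identity, where one must check that the shift $\bar{\eta}$ placed inside the norm of $\tilde{E}_d^{n,j}$ reproduces precisely the linear term generated by $\nabla F_{\mathrm{d}}$, and that the leftover constant $\frac{1}{2\nu}\|\bar{\eta}\|^2$ is the same for every client, so that it drops out of all gap comparisons and justifies distributing the tolerance $\delta_n$ evenly as $\delta_n/N$.
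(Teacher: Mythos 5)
Your proposal is correct and follows exactly the paper's argument: the paper's proof consists of the single observation that $\sum_{j=1}^N \tilde{E}_{\mathrm{d}}^{n,j}(\xi_j) = E_{\mathrm{d}}^{n}(\boldsymbol{\xi}) + \text{constant}$ "by direct calculation," and your completing-the-square computation (with the $\boldsymbol{\xi}$-independent constant $\frac{N}{2\nu}\|\bar{\eta}\|^2$) is precisely that calculation carried out explicitly, followed by the same summation of the per-client gaps.
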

\begin{proof}[Proof of \cref{Lem:Jacobi}]
By direct calculation, we get
\begin{equation*}
    \sum_{j=1}^N \tilde{E}_{\mathrm{d}}^{n,j} ( \xi_j) 
    = E_{\mathrm{d}}^n (\boldsymbol{\xi}) + \text{constant}
\end{equation*}
for any $\boldsymbol{\xi} \in \Omega^N$, which completes the proof.
\end{proof}

\subsection{Duality between \texorpdfstring{\texttt{DualFL} and \texttt{FISTA}}{DualFL and FISTA}}
An important observation is that there exists a duality relation between the sequences generated by \cref{Alg:FISTA} and those generated by \DualFL{}.
In \DualFL{}, we define two auxiliary sequences $\{ \boldsymbol{\xi}^{(n)} \}$ and $\{ \boldsymbol{\eta}^{(n)} \}$ as follows:
\begin{subequations}
\label{auxiliary}
\begin{align}
    \label{xi}
    \xi_j^{(n+1)} &= \nu ( \zeta_j^{(n)} - \theta_j^{(n+1)}), 
    \quad \xi_j^{(0)} = 0, \\
    \label{eta}
    \eta_j^{(n+1)} &= \nu (\zeta_j^{(n+1)} - (1+ \beta_n) \theta^{(n+1)} + \beta_n \theta^{(n)}),
    \quad \eta_j^{(0)} = 0,
\end{align}
\end{subequations}
for $n \geq 0$ and $1 \leq j \leq N$.
\cref{Lem:duality} summarizes the duality relation between \DualFL{} and \cref{Alg:FISTA}; the sequences $\{ \boldsymbol{\xi}^{(n)} \}$ and $\{ \boldsymbol{\eta}^{(n)} \}$ defined in~\eqref{auxiliary} agree with those generated by \cref{Alg:FISTA}.

\begin{lemma}
\label{Lem:duality}
Suppose that each $f_j$, $1 \leq j \leq N$, in~\eqref{FL} is $\mu$-strongly convex for some $\mu > 0$.
In addition, suppose that the number of local iterations for the $j$th client at the $n$th epoch of \DualFL{} is large enough to satisfy
\begin{equation*}
    \Gamma^{n,j} (\theta_j^{(n+1)}) \leq \frac{\delta_n}{N}
\end{equation*}
for some $\delta_n > 0$~($1 \leq j \leq N$, $n \geq 0$).
Then the sequences $\{ \boldsymbol{\xi}^{(n)} \}$ and $\{ \boldsymbol{\eta}^{(n)} \}$ defined in~\eqref{auxiliary} agree with those generated by \cref{Alg:FISTA} for the dual problem~\eqref{composite}.
\end{lemma}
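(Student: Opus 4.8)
The plan is to argue by induction on the epoch index $n$, showing that the pair $(\boldsymbol{\xi}^{(n+1)}, \boldsymbol{\eta}^{(n+1)})$ produced by the definitions~\eqref{xi}--\eqref{eta} is exactly what the proximal step~\eqref{prox} and the overrelaxation step~\eqref{overrelaxation} of \cref{Alg:FISTA} yield from $(\boldsymbol{\xi}^{(n)}, \boldsymbol{\eta}^{(n)})$. The base case is immediate: both \DualFL{} and \cref{Alg:FISTA} are initialized at $\textbf{0}$ and share the same recursion~\eqref{FISTA} for $\beta_n$, so $\boldsymbol{\xi}^{(0)} = \boldsymbol{\eta}^{(0)} = \textbf{0}$ in either description. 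The induction hypothesis is that $\boldsymbol{\xi}^{(m)}$ and $\boldsymbol{\eta}^{(m)}$ from~\eqref{auxiliary} coincide with the \cref{Alg:FISTA} iterates for all $m \le n$; this is what licenses us to invoke \cref{Lem:Jacobi}, whose subproblems are phrased in terms of the \texttt{FISTA} variable $\boldsymbol{\eta}^{(n)}$.

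The linchpin is a conservation law, namely $\sum_{j=1}^N \zeta_j^{(n)} = 0$ for every $n \ge 0$. I would establish it by a short induction using~\eqref{zeta}: since $\sum_j \theta_j^{(n+1)} = N\theta^{(n+1)}$ by~\eqref{theta}, the client-drift terms $\theta^{(n+1)} - \theta_j^{(n+1)}$ sum to zero, so~\eqref{zeta} collapses to $\sum_j \zeta_j^{(n+1)} = (1+\beta_n)\sum_j \zeta_j^{(n)} - \beta_n \sum_j \zeta_j^{(n-1)}$, and the claim propagates from $\boldsymbol{\zeta}^{(0)} = \boldsymbol{\zeta}^{(-1)} = \textbf{0}$. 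Feeding this into the definition~\eqref{eta} then gives the key identity
\[
    \eta_j^{(n)} - \frac{1}{N}\sum_{i=1}^N \eta_i^{(n)} = \nu \zeta_j^{(n)}, \quad 1 \le j \le N,
\]
because the averaged terms $(1+\beta_{n-1})\theta^{(n)}$ and $\beta_{n-1}\theta^{(n-1)}$ are index-free and cancel under the centering. Substituting this identity into the block Jacobi subproblem $\tilde{E}_{\mathrm{d}}^{n,j}$ of \cref{Lem:Jacobi} makes it coincide term by term with the local dual problem $E_{\mathrm{d}}^{n,j}$ of \cref{Prop:local_dual}.

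With the two subproblems identified, I would transfer the inexactness. By \cref{Prop:local_dual}, the quantity $\xi_j^{(n+1)} = \nu(\zeta_j^{(n)} - \theta_j^{(n+1)})$ from~\eqref{xi} is precisely the dual variable attached to the local solution $\theta_j^{(n+1)}$, and the primal-dual gap~\eqref{Gamma} controls the dual suboptimality. Indeed, strong duality $\min E^{n,j} = -\min E_{\mathrm{d}}^{n,j}$ together with $\min E^{n,j} \le E^{n,j}(\theta_j^{(n+1)})$ yields $E_{\mathrm{d}}^{n,j}(\xi_j^{(n+1)}) - \min E_{\mathrm{d}}^{n,j} \le \Gamma^{n,j}(\theta_j^{(n+1)}) \le \delta_n/N$. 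Since $E_{\mathrm{d}}^{n,j} = \tilde{E}_{\mathrm{d}}^{n,j}$, the hypothesis of \cref{Lem:Jacobi} holds with $\tilde{\xi}_j^{(n+1)} = \xi_j^{(n+1)}$, so \cref{Lem:Jacobi} guarantees that $\boldsymbol{\xi}^{(n+1)}$ solves the proximal problem~\eqref{prox} to accuracy $\delta_n$; this is exactly the proximal step of \cref{Alg:FISTA}.

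Finally, I would check the overrelaxation step directly: inserting the recursion~\eqref{zeta} into~\eqref{eta} and expressing both $\xi_j^{(n+1)}$ and $\xi_j^{(n)}$ via~\eqref{xi}, the server averages $(1+\beta_n)\theta^{(n+1)} - \beta_n\theta^{(n)}$ cancel, leaving $\eta_j^{(n+1)} = (1+\beta_n)\xi_j^{(n+1)} - \beta_n\xi_j^{(n)}$, which is precisely~\eqref{overrelaxation}. Matching both updates at step $n+1$ closes the induction. I expect the conservation law and the resulting centering identity relating $\boldsymbol{\eta}^{(n)}$ to $\boldsymbol{\zeta}^{(n)}$ to be the main obstacle, since everything downstream—the identification $E_{\mathrm{d}}^{n,j} = \tilde{E}_{\mathrm{d}}^{n,j}$, the inexactness translation through \cref{Prop:local_dual}, and the overrelaxation bookkeeping—is mechanical once that identity is in place.
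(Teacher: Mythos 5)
Your proposal is correct and follows essentially the same route as the paper's proof: the conservation law $\sum_j \zeta_j^{(n)} = 0$, the centering identity $\nu\zeta_j^{(n)} = \eta_j^{(n)} - \frac{1}{N}\sum_i \eta_i^{(n)}$, the identification of the local dual problem with the block Jacobi subproblem of \cref{Lem:Jacobi}, and a direct computation for the overrelaxation step. Your explicit induction framing and the spelled-out transfer of the primal-dual gap bound to dual suboptimality via strong duality merely make precise what the paper's terser proof leaves implicit.
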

\begin{proof}
It suffices to show that the sequences $\{ \boldsymbol{\xi}^{(n)} \}$ and $\{ \boldsymbol{\eta}^{(n)} \}$ defined in~\eqref{auxiliary} satisfy~\eqref{prox} and~\eqref{overrelaxation}.
We first observe that
\begin{equation}
    \label{zeta_sum}
    \sum_{i=1}^N \zeta_i^{(n)} = 0, \quad n \geq 0,
\end{equation}
which can be easily derived by mathematical induction with~\eqref{theta} and~\eqref{zeta}.
Now, we take any $n \geq 0$ and $1 \leq j \leq N$.
By direct calculation, we obtain
\begin{equation*} \begin{split}
    \sum_{i=1}^N \eta_i^{(n)}
    &\stackrel{\eqref{eta}}{=} \nu \sum_{i=1}^N \zeta_i^{(n)} - \nu N (1+ \beta_n) \theta^{(n)} + \nu N \beta_n \theta^{(n-1)} \\
    &\stackrel{\eqref{zeta_sum}}{=} - N \nu ( 1 + \beta_n ) \theta^{(n)} + N \nu \beta_n \theta^{(n-1)} \\
    &\stackrel{\eqref{eta}}{=} N \eta_j^{(n)} - N \nu \zeta^{(n)}.
\end{split} \end{equation*}
Hence, we get
\begin{equation}
    \label{zeta_eta}
    \nu \zeta_j^{(n)} = \eta_j^{(n)} - \frac{1}{N} \sum_{i=1}^N \eta_i^{(n)}.
\end{equation}
Combining~\eqref{local_dual},~\eqref{zeta_eta}, and \cref{Lem:Jacobi} implies that $\{ \boldsymbol{\xi}^{(n)} \}$ and $\{ \boldsymbol{\eta}^{(n)} \}$ satisfy~\eqref{prox}.
On the other hand, we obtain by direct calculation that
\begin{equation*} \begin{split}
    (1+\beta_n) \xi_j^{(n+1)} - \beta_n \xi_j^{(n)}
    &\stackrel{\eqref{xi}}{=} \nu (1+ \beta_n) (\zeta_j^{(n)} - \theta_j^{(n+1)}) - \nu \beta_n (\zeta_j^{(n-1)} - \theta_j^{(n)}) \\
    &\stackrel{\eqref{zeta}}{=} \nu \zeta_j^{(n+1)} - \nu ( 1+ \beta_n) \theta^{(n+1)} - \nu \beta_n \theta^{(n)} \\
    &\stackrel{\eqref{eta}}{=} \eta_j^{(n+1)},
\end{split} \end{equation*}
which implies that $\{ \boldsymbol{\xi}^{(n)} \}$ and $\{ \boldsymbol{\eta}^{(n)} \}$ satisfy~\eqref{overrelaxation}.
This completes the proof.
\end{proof}

\Cref{Lem:duality} implies that \DualFL{} is a predualization~\cite{LG:2019,LN:2017} of \cref{Alg:FISTA}.
Namely, \DualFL{} can be constructed by transforming the dual sequence $\{ \boldsymbol{\xi}^{(n)} \}$ generated by \cref{Alg:FISTA} into the primal sequence $\{ \theta^{(n)} \}$ by leveraging the primal-dual relation~\eqref{pd_relation}.

\subsection{Proofs of \texorpdfstring{\cref{Thm:nonsmooth,Thm:smooth}}{Theorems 3.2 and 3.3}}
Finally, the main convergence theorems for \DualFL{}, \cref{Thm:nonsmooth,Thm:smooth}, can be derived by combining the optimal convergence properties of \cref{Alg:FISTA} presented in \cref{Prop:FISTA_nonsmooth,Prop:FISTA_smooth} and the duality relation presented in \cref{Lem:duality}.

\begin{proof}[Proof of \cref{Thm:nonsmooth,Thm:smooth}]
Thanks to \cref{Lem:duality}, the sequence $\{ \boldsymbol{\xi}^{(n)} \}$ defined in~\eqref{xi} satisfies the convergence properties given in \cref{Prop:FISTA_nonsmooth,Prop:FISTA_smooth}, i.e.,
\begin{equation}
    \label{dual_estimate}
    E_{\mathrm{d}} ( \boldsymbol{\xi}^{(n)} ) - E_{\mathrm{d}} \left( \boldsymbol{\xi}^* \right)  \lesssim
    \begin{cases}
    \displaystyle \frac{1}{n^2}, & \text{ in the case of \cref{Thm:nonsmooth},} \\
    \displaystyle \left( 1 - \sqrt{\rho} \right)^{n}, & \text{ in the case of \cref{Thm:smooth}.}
    \end{cases}
\end{equation}
Next, we derive an estimate for the primal norm error $\| \theta^{(n)} - \theta^* \|$ by a similar argument as in  of~\cite[Corollary~1]{LP:2021}.
Note that the dual cost function $E_{\mathrm{d}}$ given in~\eqref{FL_dual} is $\frac{1}{N \nu}$-strongly convex relative to a seminorm $| \boldsymbol{\xi} | = \| \sum_{j=1}^N \xi_j \|$.
Hence, by~\eqref{pd_relation},~\eqref{xi}, and~\eqref{zeta_sum}, we obtain
\begin{equation}
    \label{solution_error}
    \| \theta^{(n)} - \theta^* \|^2
    = \frac{1}{N^2 \nu^2} \left\| \sum_{j=1}^N \left( \xi_j^{(n)} - \xi_j^* \right) \right\|^2 
    \leq \frac{2}{N \nu} \left( E_{\mathrm{d}} (\boldsymbol{\xi}^{(n)})  - E_{\mathrm{d}} (\boldsymbol{\xi}^*) \right).
\end{equation}
Meanwhile, it is clear that
\begin{equation}
    \label{energy_error}
    E(\theta^{(n)}) - E(\theta^*) \leq \frac{L}{2} \| \theta^{(n)} - \theta^* \|^2
\end{equation}
under the $L$-smoothness assumption.
Combining~\eqref{dual_estimate},~\eqref{solution_error}, and~\eqref{energy_error} completes the proof.
\end{proof}

\section{Numerical experiments}
\label{Sec:Numerical}
In this section, we present numerical results that demonstrate the performance of \DualFL{}.
All the algorithms were programmed using MATLAB~R2022b and performed on a desktop equipped with AMD Ryzen~5 5600X CPU~(3.7GHz, 6C), 40GB RAM, NVIDIA GeForce GTX~1660 SUPER GPU with 6GB GDDR6 memory, and the operating system Windows~10 Pro.

\begin{table}
    \centering
    \caption{Description of the hyperparameters appearing in the benchmark algorithms \texttt{FedPD}, \texttt{FedDR}, \texttt{FedDualAvg}, \texttt{FedDyn}, and \texttt{Scaffnew}, \texttt{APDA-Inexact}, \texttt{5GCS}, and the proposed \texttt{DualFL}{}.
    We use the notation for each hyperparameter as given in the original paper.
    The value of each hyperparameter is determined using a grid search.}
    \resizebox{\textwidth}{!}{ 
    \begin{tabular}{m{0.22\textwidth}m{0.1\textwidth}m{0.18\textwidth}m{0.3\textwidth}m{0.1\textwidth}m{0.1\textwidth}}
    \multirow{2}{*}{Algorithm} & Hyper- & \multirow{2}{*}{Description} & \multirow{2}{*}{Grid} & \multicolumn{2}{c}{Value} \\
    \cline{5-6}
    & param. & & & MNIST & CIFAR-10\\
    \hline\hline
    \texttt{FedPD}~\cite{ZHDYL:2021}
    & $\eta$ & Local penalty param. & $\{ 10^{-m}: m \in \mathbb{Z}_{\geq 0} \}$ & \multicolumn{2}{c}{$10^{-4}$} \\
    \hline
    \multirow{2}{*}{\texttt{FedDR}~\cite{TPPN:2021}}
    & $\eta$ & Local penalty param. & $\{ 10^{-m}: m \in \mathbb{Z}_{\geq 0} \}$ & \multicolumn{2}{c}{$10^{-4}$} \\
    & $\alpha$ & Overrelax.\ param. & $\{ 1, 2 \}$ & \multicolumn{2}{c}{$1$} \\
    \hline
    \multirow{2}{*}{\texttt{FedDualAvg}~\cite{YZR:2021}}
    & $\eta_c$ & Client learning rate & $\{ 10^{-m}: m \in \mathbb{Z}_{\geq 0} \}$ & $10^{-3}$ & $10^{-4}$\\
    & $\eta_s$ & Server learning rate & $\{ 1\}$ & \multicolumn{2}{c}{$1$} \\
    & $K$ & \# local gradient steps & $\{ 10^m : m \in \mathbb{Z}_{\geq 0} \}$ & $10$ & $10^2$ \\
    \hline
    \texttt{FedDyn~\cite{AZMMWS:2021}}    & $\alpha$ & Regularization param. & $\{ 10^{m}: m \in \mathbb{Z}_{\geq 0} \}$ & \multicolumn{2}{c}{$10^3$} \\
    \hline
    \multirow{2}{*}{\texttt{Scaffnew}~\cite{MMSR:2022}}     & $\gamma$ & Learning rate & $\{ 10^{-m}: m \in \mathbb{Z}_{\geq 0} \}$ & $10^{-5}$ & $10^{-7}$ \\
    & $p$ & Commun.\ probability & $\{10^{-m} : m \in \mathbb{Z}_{\geq 0} \}$ & $10^{-1}$ & $10^{-3}$ \\
    \hline
    \multirow{4}{*}{\texttt{APDA-Inexact}~\cite{SKR:2022}}     & $\eta_x$ & Primal learning rate & $\{ 10^{-m}: m \in \mathbb{Z}_{\geq 0} \}$ & $10^{-3}$ & $10^{-4} $\\
    & $\eta_y$ & Dual learning rate I& $\{1/32 \eta_x \}$ & \multicolumn{2}{c}{$1/32 \eta_x$} \\
    & $\beta_y$ & Dual learning rate II & $\{ 10^{-m}: m \in \mathbb{Z}_{\geq 0} \}$ & $10^{-2}$ & $10^{-3}$ \\
    & $\theta$ & Overrelax.\ param. & $\{\ \max \{ \frac{2}{2+\mu\eta_x}, 1-\beta_y \eta_y \} \}$ & \multicolumn{2}{c}{$\max \{ \frac{2}{2+\mu\eta_x}, 1-\beta_y \eta_y \}$} \\
    \hline
    \multirow{2}{*}{\texttt{5GCS}~\cite{GMR:2022}}     & $\gamma$ & Primal learning rate & $\{ 10^{-m}: m \in \mathbb{Z}_{\geq 0} \}$ & \multicolumn{2}{c}{$10^{-4}$} \\
    & $\tau$ & Dual learning rate & $\{1/2\gamma N \}$ & \multicolumn{2}{c}{$1/2\gamma N$} \\
    \hline
    \multirow{2}{*}{\texttt{DualFL}{}} & $\rho$ & Momentum param. & $\{ m\times 10^{-3} : m \in \mathbb{Z}_{\geq 0} \}$ & $3 \times 10^{-3}$ & $0$\\
    & $\nu$ & Param.\ for duality & $\{ \mu \}$ & \multicolumn{2}{c}{$\mu$} \\
    \end{tabular}
    }
    \label{Table:hyperparameters}
\end{table}

As benchmarks, we choose the following recent federated learning algorithms: \texttt{FedPD}~\cite{ZHDYL:2021}, \texttt{FedDR}~\cite{TPPN:2021}, \texttt{FedDualAvg}~\cite{YZR:2021}, \texttt{FedDyn}~\cite{AZMMWS:2021}, \texttt{Scaffnew}~\cite{MMSR:2022}, \texttt{APDA-Inexact}~\cite{SKR:2022}, and \texttt{5GCS}~\cite{GMR:2022}.
All the hyperparameters appearing in these algorithms and \texttt{DualFL} are tuned by a grid search; see \cref{Table:hyperparameters} for details of the tuned hyperparameters.

\begin{remark}
\label{Rem:primal}
While we also conducted experiments with several primal federated learning algorithms such as \texttt{FedAvg}~\cite{MMRHA:2017}, \texttt{FedCM}~\cite{XWWY:2021}, and \texttt{FedSAM}~\cite{QLDLTL:2022}, which do not rely on duality in their mechanisms, we do not present their results as their performances were not competitive compared to other methods.
\end{remark}

To solve the local problems encountered in these algorithms, we employ the optimized gradient method with adaptive restart~(\texttt{AOGM}) proposed in~\cite{KF:2018}, with the stop criterion in which the algorithm terminates when the relative energy difference becomes less than $10^{-12}$.
In each iteration of \texttt{AOGM}, the step size is determined using the full backtracking scheme introduced in~\cite{CC:2019}.

To test the performance of the algorithms, we use multinomial logistic regression problem, which is stated as
\begin{equation}
    \label{LR}
    \min_{\theta = (w, b) \in \mathbb{R}^{d \times k} \times \mathbb{R}^k}
    \left\{ \frac{1}{n} \sum_{j=1}^n \log \left( \sum_{l=1}^k e^{  (w_l \cdot x_j + b_l) - (w_{y_j} \cdot x_j + b_{y_j} ) } \right) + \frac{\mu}{2} \| \theta \|^2 \right\},
\end{equation}
where $\{ (x_j , y_j ) \}_{j=1}^n \subset \mathbb{R}^d \times \{ 1, \dots, k \}$ is a labeled dataset.
In~\eqref{LR}, we set the regularization parameter $\mu = 10^{-2}$.
We use the MNIST~\cite{LBBH:1998}~($k = 10$, $n = 60,000$, $d = 28 \times 28$) and CIFAR-10~\cite{Krizhevsky:2009}~($k = 10$, $n = 60,000$, $d = 32 \times 32 \times 3$) training datasets.
We assume that the dataset is evenly distributed to $N$ clients to form $f_1$, \dots, $f_N$, so that~\eqref{LR} is expressed in the form~\eqref{FL}.
A reference solution $\theta^* \in \mathbb{R}^{(d+1) \times k}$ of~\eqref{LR} is obtained by a sufficient number of damped Newton iterations~\cite{BV:2004}.

\begin{figure}
  \centering
  \includegraphics[width=\textwidth]{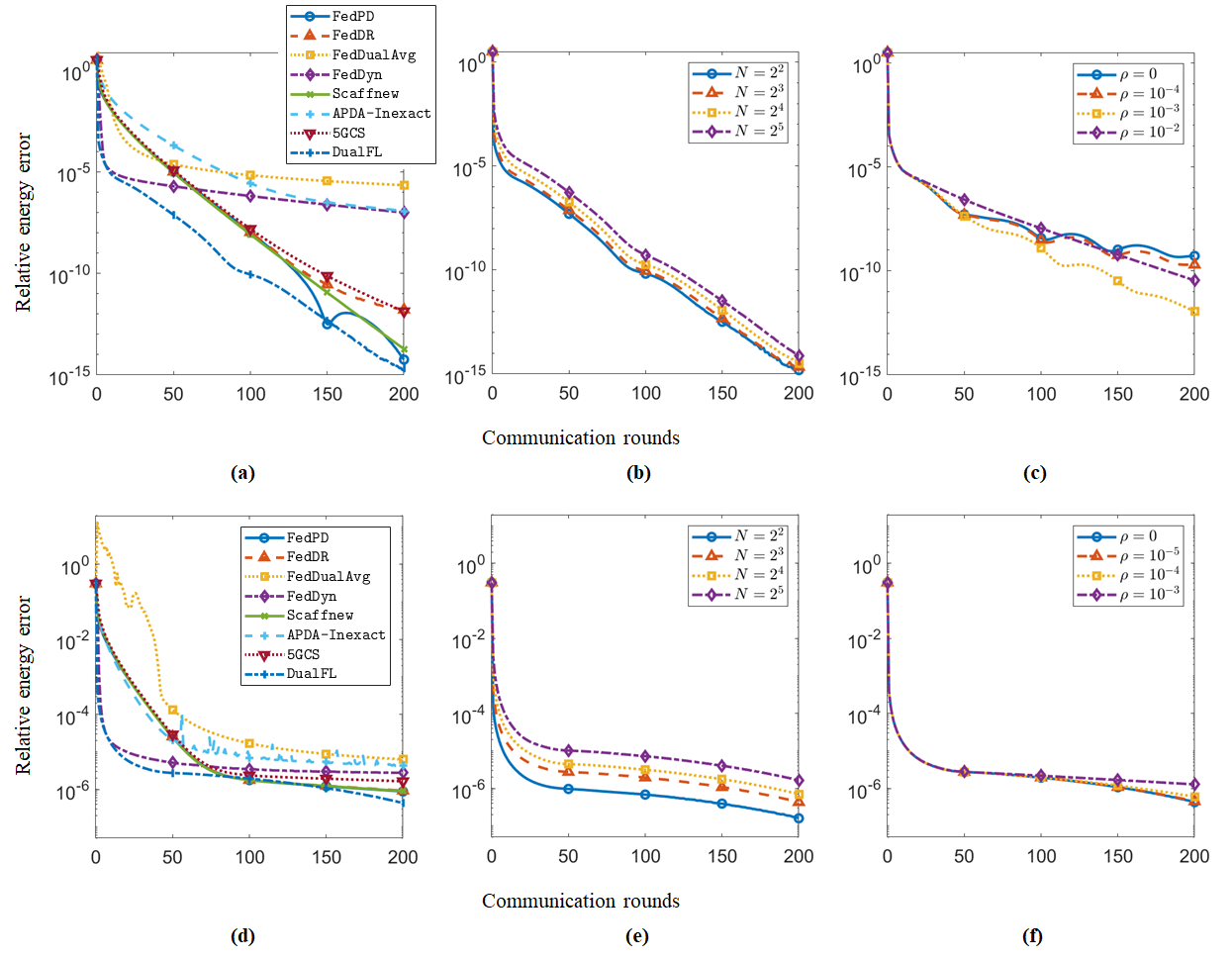}
  \caption{Relative energy error $\frac{E(\theta) - E(\theta^*)}{E(\theta^*)}$ with respect to the number of communication rounds in various training algorithms for multinomial logistic regression on the {\rm{\textbf{(a--c)}}}~MNIST and {\rm{\textbf{(d--f)}}}~CIFAR-10 training dataset.
  {\rm{\textbf{(a, d)}}}~Comparison of \texttt{DualFL}{} with benchmark algorithms.
  {\rm{\textbf{(b, e)}}}~Convergence of \texttt{DualFL}{} when the number of clients $N$ changes.
  {\rm{\textbf{(c, f)}}}~Convergence of \texttt{DualFL}{} when the value of the hyperparameter $\rho$ changes.
  }
  \label{Fig:numerical}
\end{figure}


Numerical results are presented in \cref{Fig:numerical}.
\cref{Fig:numerical}(a, d) displays the convergence behavior of the benchmark algorithms, along with \DualFL{}, when $N = 2^3$.
While the linear convergence rate of \DualFL{} appears to be similar to those of \texttt{FedPD}, \texttt{FedDR}, and \texttt{Scaffnew}, the energy curve of \DualFL{} is consistently lower than those of the other algorithms because \DualFL{} achieves faster energy decay in the first several iterations, similar to \texttt{FedDyn}.
That is, the \DualFL{} loss decays as fast as \texttt{FedDyn} in the first several iterations, and then the linear decay rate of \DualFL{} becomes similar to those of \texttt{FedPD}, \texttt{FedDR}, and \texttt{Scaffnew}.
\cref{Fig:numerical}(b, e) verifies that the convergence rate of \DualFL{} does not deteriorate even if the number of clients $N$ becomes large.
That is, \DualFL{} is robust to a large number of clients.
Finally, \cref{Fig:numerical}(c, f) illustrates the convergence behavior of \DualFL{} under the condition where $\nu$ is fixed by $\mu$, and the value of $\rho$  are varied.
It can be seen that even when $\rho$ is chosen far from its tuned value, the convergence rate of \DualFL{} does not deteriorate significantly.
This verifies the robustness of \DualFL{} with respect to hyperparameter tuning.

\section{Conclusion}
\label{Sec:Conclusion}
In this paper, we proposed a new federated learning algorithm, called \DualFL{}, based on the duality between the model federated learning problem and a composite optimization problem.
We demonstrated that \DualFL{} achieves communication acceleration, even in cases where the cost function lacks smoothness or strong convexity.
This is the first result in the field of federated learning regarding communication acceleration of general convex cost functions.
Through numerical experiments, we further confirmed that \DualFL{} outperforms several recent federated learning algorithms.

\subsection{Limitations and future works}
A major limitation of this paper is that all the results are based on the convex setting.
Although this limitation is also present in many recent works on federated learning algorithms~\cite{CR:2023,MMSR:2022,SKR:2022}, the nonconvex setting should be considered in future research to cover a wider range of practical machine learning tasks.

While our primary emphasis in this paper lies in enhancing the communication efficiency of training algorithms, we recognize that there are other critical aspects of federated learning, particularly concerning the stochastic nature of machine learning problems~\cite{WBSS:2021}.
Stochasticity can manifest in various aspects of federated learning, including stochastic local training algorithms, client sampling, and communication compression~\cite{GMR:2022,GMR:2023}.
We have successfully tackled the aspect of stochastic local training algorithms in our paper, as \DualFL{} has the flexibility to adopt various local solvers.
However, challenges remain in addressing client sampling and communication compression.
We anticipate that our results can be extended to incorporate client sampling by building upon existing works, such as~\cite{LX:2015,RT:2016}, which focus on accelerated randomized block coordinate descent methods.
Additionally, considering that communication compression can be modeled using stochastic gradients~\cite{GHR:2021}, we consider extending our results by designing an appropriate acceleration scheme for stochastic gradients as a future work.

\bibliographystyle{siamplain}
\bibliography{refs_FL}
\end{document}